\documentclass[arxiv]{tiltpaper}

\PassOptionsToPackage{numbers,square}{natbib}
\usepackage[utf8]{inputenc}
\usepackage[T1]{fontenc}
\usepackage{hyperref}
\usepackage{url}
\usepackage{booktabs}
\usepackage{amsfonts}
\usepackage{nicefrac}
\usepackage{microtype}
\usepackage{xcolor}
\usepackage{graphicx}
\usepackage[percent]{overpic}
\usepackage{mathtools}
\usepackage{amsmath,amssymb,amsthm,amsfonts,bbm}
\usepackage{mathbbol}
\usepackage{mathrsfs}

\usepackage{multicol}
\usepackage{rotating}
\usepackage{enumitem,comment, xifthen}
\usepackage[mathscr]{euscript}
\usepackage[most]{tcolorbox}

\usepackage{xspace}

\providecommand{\figpanel}[3]{%
  \begin{minipage}[t]{#1}
    \centering
    \begin{overpic}[width=\linewidth]{#3}
      \put(1,94){\scriptsize\bfseries #2}
    \end{overpic}
  \end{minipage}%
}
\providecommand{\figpanelbox}[3]{%
  \begin{minipage}[t]{#1}
    \centering
    \begin{overpic}[width=\linewidth]{#3}
      \put(-2,21){\begingroup\setlength{\fboxsep}{1pt}\colorbox{white}{\scriptsize\bfseries #2}\endgroup}
    \end{overpic}
  \end{minipage}%
}
\usepackage{algorithmic}
\usepackage{algorithm}

\usepackage{aliascnt}

\usepackage{wrapfig}
\hypersetup{ colorlinks, hypertexnames=false, linkcolor={red!72!black},
  citecolor={blue!72!black}, urlcolor={magenta!72!black} }

\usepackage[nameinlink]{cleveref}
\crefname{theorem}{theorem}{theorems}
\Crefname{theorem}{Theorem}{Theorems} \crefname{lemma}{lemma}{lemmas}
\Crefname{lemma}{Lemma}{Lemmas}
\crefname{proposition}{proposition}{propositions}
\Crefname{proposition}{Proposition}{Propositions}
\crefname{corollary}{corollary}{corollaries}
\Crefname{corollary}{Corollary}{Corollaries}
\crefname{definition}{definition}{definitions}
\Crefname{definition}{Definition}{Definitions}
\crefname{assumption}{assumption}{assumptions}
\Crefname{assumption}{Assumption}{Assumptions}
\crefname{remark}{remark}{remarks} \Crefname{remark}{Remark}{Remarks}

\usepackage{iiw_macros}
\usepackage{new_tilt_macros}

\makeatletter
\long\def\@makecaption#1#2{
  \vskip\abovecaptionskip
  \small #1: #2\par
  \vskip\belowcaptionskip
}
\renewcommand{\section}{%
  \@startsection{section}{1}{\z@}%
                {-1.85ex \@plus -0.45ex \@minus -0.2ex}%
                { 1.25ex \@plus  0.25ex \@minus  0.2ex}%
                {\large\bf\raggedright}%
}
\renewcommand{\subsection}{%
  \@startsection{subsection}{2}{\z@}%
                {-1.6ex \@plus -0.45ex \@minus -0.2ex}%
                { 0.65ex \@plus  0.2ex}%
                {\normalsize\bf\raggedright}%
}
\renewcommand{\paragraph}{%
  \@startsection{paragraph}{4}{\z@}%
                {1.1ex \@plus 0.4ex \@minus 0.2ex}%
                {-1em}%
                {\normalsize\bf}%
}
\makeatother

\providecommand{\mygraybox}[1]{%
{\begin{tcolorbox}[
    colback=gray!10,
    colframe=gray!70,
    arc=4pt,
    boxrule=0.7pt,
    left=6.5pt,
    right=6.5pt,
    top=3.5pt,
    bottom=3.5pt,
    before skip=5pt plus 1pt minus 1pt,
    after skip=5pt plus 1pt minus 1pt,
    enhanced jigsaw,
    parbox=false
]
#1
  \end{tcolorbox}
  }
}

\AtBeginDocument{
  \setlength{\parskip}{4.5pt plus 0.5pt minus 0.5pt}
  \setlength{\textfloatsep}{8pt plus 1pt minus 2pt}
  \setlength{\floatsep}{7pt plus 1pt minus 2pt}
  \setlength{\intextsep}{7pt plus 1pt minus 2pt}
  \setlength{\abovecaptionskip}{3.5pt plus 1pt minus 1pt}
  \setlength{\belowcaptionskip}{0pt}
  \setlength{\abovedisplayskip}{5.5pt plus 1.5pt minus 1.5pt}
  \setlength{\belowdisplayskip}{5.5pt plus 1.5pt minus 1.5pt}
  \setlength{\abovedisplayshortskip}{3.5pt plus 1pt minus 1pt}
  \setlength{\belowdisplayshortskip}{3.5pt plus 1pt minus 1pt}
}

\newcommand{\lampar}{\ensuremath{\lambda}}

\newcommand{\ffun}{f}  \newcommand{\bfun}{b}

\newcommand{\Vlam}{\ensuremath{v_\lampar}}
\newcommand{\Wlam}{\ensuremath{w_\lampar}}

\newcommand{\pdens}{\ensuremath{p}}
\newcommand{\qdens}{\ensuremath{q}}

\newcommand{\ERisk}{\ensuremath{\Exs\,\Risk}}
\newcommand{\ERiskHat}{\ensuremath{\Exs\,\RiskHat}}

\newcommand{\bstar}{\bfun^*}

\newcommand{\tilt}{\texttt{TILT}\xspace}
\newcommand{\kd}{\texttt{KD}\xspace}
\newcommand{\iw}{\texttt{IW}\xspace}
\newcommand{\kdtilt}{\texttt{KD-TILT}\xspace}
\newcommand{\kltilt}{\texttt{KL-TILT}\xspace}

\newcommand{\rl}{\texttt{RuLSIF}\xspace}

\newcommand{\Errlam}{\Err_\lambda^2}
\newcommand{\Errq}{\Err_\Target^2}

\newcommand{\Jpsilam}{\mathcal{J}_{\psi,\lambda}}
\newcommand{\Riskpsi}{\Risk_{\psi}} \newcommand{\bflam}{\bfun_{f,
    \lambda}}

\newcommand{\nlam}{\ensuremath{n_\lambda}}

\PaperBibliographyStyle{alpha}
\PaperBibliography{references}
\PaperBibliography{new_refs}

\PaperTitle{\tilt: Target-induced loss tilting under covariate shift}

\PaperAuthorsNeurIPS{%
  Kakei Yamamoto \\
  Lab for Information and Decision Systems \\
  Statistics and Data Science Center \\
  EECS, Massachusetts Institute of Technology \\
  \texttt{kakei@mit.edu}
  \And
  Martin J.~Wainwright \\
  Lab for Information and Decision Systems \\
  Statistics and Data Science Center \\
  Mathematics and EECS, Massachusetts Institute of Technology \\
  \texttt{mjwain@mit.edu}
}

\PaperAuthorsArxiv{%
  \begin{tabular}{ccc}
    Kakei Yamamoto$^\dagger$ && Martin J.~Wainwright$^\star$\\
    \texttt{kakei@mit.edu} && \texttt{mjwain@mit.edu}
  \end{tabular}

  \vspace*{0.2in}
  \begin{tabular}{c}
    Lab for Information and Decision Systems \\
    Statistics and Data Science Center \\
    EECS$^{\dagger,\star}$ and Mathematics$^\star$ \\
    Massachusetts Institute of Technology
  \end{tabular}
}

\PaperAbstract{We introduce and analyze Target-Induced Loss Tilting (\tilt) for
unsupervised domain adaptation under covariate shift.  It is based on
a novel objective function that decomposes the source predictor as
$f+b$, fits $f+b$ on labeled source data while simultaneously
penalizing the auxiliary component $b$ on unlabeled target inputs. The
resulting fit $f$ is deployed as the final target predictor. At the
population level, we show that this target-side penalty implicitly
induces relative importance weighting at the population level, but in
terms of an estimand $b^*_f$ that is \emph{self-localized} to the
current error, and remains \emph{uniformly bounded} for any
source-target pair (even those with disjoint supports).  We prove a
general finite-sample oracle inequality on the excess risk, and use it
to give an end-to-end guarantee for training with sparse ReLU
networks. Experiments on controlled regression problems and shifted
CIFAR-100 distillation show that \tilt improves target-domain
performance over source-only training, exact importance weighting, and
relative density-ratio baselines, with a stable dependence on the
regularization parameter.
}

\begin{document}
\MakePaperTitle

\section{Introduction}
\label{SecIntro}

Many prediction systems are trained on labeled data from a source
domain and then deployed on inputs drawn from a different target
domain. This mismatch arises in applications such as medical diagnosis
across patient populations, animal classification across acquisition
environments, and poverty prediction across geographic settings (e.g.,
see the survey paper~\citep{koh2021wilds} and references therein). We
study this problem under covariate shift: the source and target
covariate distributions differ, but the conditional law of the
response given the covariate is
shared~\citep{shimodaira2000improving,sugiyama2012density}. In this
setting, ordinary source empirical risk minimization need not target
the relevant prediction criterion, since it weights errors according
to the source marginal rather than the target marginal.

The standard correction is importance weighting. If $\pdens$ and
$\qdens$ denote the source and target covariate densities,
respectively, then the target risk can be expressed as a source
expectation weighted by the density ratio $\qdens(x)/\pdens(x)$. This
identity is exact, but its direct use has two drawbacks. First, the
ratio can be highly variable or unbounded, which increases statistical
variance and can destabilize optimization. Second, the ratio is
typically unknown and must be estimated from unlabeled source and
target samples. Direct density-ratio methods such as KMM, KLIEP, and
LSIF avoid separately estimating $\pdens$ and $\qdens$, but still
require an explicit weight-estimation stage
\citep{huang2007correcting,gretton2008covariate,sugiyama2008direct,
  kanamori2009least}. Relative density-ratio methods such as RuLSIF
replace the ordinary ratio by a regularized ratio to improve stability
\citep{yamada2011relative}, while one-step approaches jointly optimize
prediction and weighting components \citep{zhang2020onestep}. These
methods differ algorithmically, but they remain explicitly
weight-based.  See~\Cref{SecRelated} for a more detailed discussion
and this and other related work. \\

In this paper, we propose a novel method known as
\textbf{Target-Induced Loss Tilting} (\tilt), a one-step method for
covariate-shift adaptation that removes density-ratio estimation from
the algorithm. The \tilt method decomposes the source predictor as
$f+b$. The sum $f+b$ is fit on labeled source data, while the
auxiliary component $b$ is penalized on unlabeled target covariates;
the deployed predictor is $f$. The role of $b$ is to absorb structure
useful for fitting the source distribution, subject to the constraint
that this auxiliary structure is suppressed on the target covariate
distribution. Thus the target sample enters only through a penalty on
the auxiliary component, not through estimated density ratios or
target labels.

A key technical property of the \tilt method is that this target-side
penalty induces a relative weighting criterion after profiling out the
auxiliary component. For the least-squares population objective,
optimizing over $b$ yields an exact $\lambda$-relative weighted target
excess risk.  For a given predictor $f$, the corresponding optimal
offset is given by $\bstar_f(x) = - \frac{\pdens(x)}{\pdens(x)+\lambda
  \qdens(x)} \{f(x)-f^\star(x)\}$, where $\fstar(x) = \Exs[Y\mid X=x]$
is the optimal predictor. This expression involves the \emph{product}
of the two terms: the bounded factor
$\pdens(x)/(\pdens(x)+\lambda\qdens(x))$, rather than the possibly
unbounded density ratio $\qdens(x)/\pdens(x)$, and the current
prediction error $f - \fstar$. Consequently, the \tilt method has a
\emph{self-localizing property}: the auxiliary task becomes easier as
the main predictor $f$ improves, and the resulting procedure
implements $\fstar$-targeted covariate-shift correction without
explicitly estimating $\qdens/\pdens$.

\myparagraph{Contributions} Let us summarize our contributions. First,
we introduce \tilt, a one-step procedure for covariate-shift
adaptation that fits an additive predictor $f+b$ on labeled source
data, penalizes the auxiliary component $b$ on unlabeled target
covariates, and deploys only $f$.  Unlike importance-weighting
methods, \tilt never estimates, clips, or outputs a density ratio.

Second, we provide a rigorous theoretical under-pinning for the
method.  Our first theoretical result (\Cref{ThmRiskDecomp}) shows
that this simple target-side penalty has attractive population-level
properties. Profiling out the auxiliary component $\bfun$ yields a
$\lambda$-relative weighted target excess risk. Moreover, the optimal
offset $\bstar_f$ is uniformly bounded for arbitrary source--target
pairs and is localized to the current prediction error. Thus, \tilt
implements covariate-shift correction without requiring density-ratio
estimation, and in a way that is self-localized to the error $f -
\fstar$.

In addition, we give two finite-sample guarantees for the data-based
procedure used in practice: \Cref{ThmExcessRisk} gives a general
oracle inequality that separates approximation error from the source
and target estimation terms, and~\Cref{CorExcessRisk} specializes this
general result to sparse ReLU networks to obtain an end-to-end
nonparametric rate. Our bounds make explicit how the regularization
parameter $\lambda$ trades target relevance against estimation
variance, and how various smoothness conditions affect the rates.

On the empirical side, we evaluate \tilt through a suite of numerical
studies. In controlled regression experiments with known source and
target densities, we compare against source ERM, as well as the
\emph{oracle forms} of importance weighting and relative-density-ratio
that are given knowledge of the pair $(p,q)$.  We construct an
experiment demonstrating that \tilt achieves \emph{minimax-optimal}
rates in terms of covariate shift severity.  Finally, in shifted
CIFAR-100 distillation, we extend the same idea to auxiliary logits
and show that target-side tilting improves target performance over
source-only training and KD methods under image-level covariate shift.

\myparagraph{Organization} \Cref{SecPreliminaries} sets up the
covariate-shift problem and introduces the \tilt
objective. \Cref{SecMethods} proves the population reweighting
identity, and \Cref{SecExtensions} records the classification variant
used in the distillation experiments.  \Cref{SecTheory} gives the main
finite-sample and ReLU-network guarantees. \Cref{SecNumerical}
evaluates \tilt on synthetic regression problems and shifted
CIFAR-100.

\subsection{Related work}
\label{SecRelated}

Here we elaborate on three threads of work related to the \tilt
approach; due to space constraints, we limit ourselves to those papers
most directly relevant.

\myparagraph{Importance weighting} The standard correction for
covariate shift is importance-weighted (IW) risk
minimization~\citep{shimodaira2000improving,sugiyama2012density},
which rewrites target risk as a source expectation weighted by
$w(x)=\qdens(x)/\pdens(x)$.  Generalization bounds for IW methods show
that difficulty depends on the magnitude and variability of the weight
function~\citep{cortes2010learning,MaPatWai23}.  Since the density
ratio is typically unknown, a large literature estimates it directly
from unlabeled source and target samples, including
KMM~\citep{huang2007correcting,gretton2008covariate},
KLIEP~\citep{sugiyama2008direct}, LSIF~\citep{kanamori2009least}, and
related convex risk formulations~\citep{nguyen2010estimating}.  Zhang
et al.~\citep{zhang2020onestep} replace the two-stage
importance-weighting pipeline with a joint optimization over
predictors and weights, but still explicitly parameterize and learn
the density ratio.  Relative density-ratio methods such as RuLSIF
replace the ordinary ratio by a regularized variant so as to improve
stability at the expense of bias~\citep{yamada2011relative}. The \tilt
method involves a related form of regularization, but \emph{does not}
estimate or output density ratios: the relative weights appear only
after the auxiliary component is optimized out, and the optimal offset
$\bstar_f$ couples it with the estimate $f$
(cf.~\Cref{ThmRiskDecomp}).

\myparagraph{Domain adaptation and distributional robustness} A
broader line of domain-adaptation work seeks representations or
predictors whose behavior is stable across domains. Classical theory
bounds target error using source error and a discrepancy between
domains \citep{ben2010theory}. Kernel and moment-matching approaches
control distributional discrepancy through quantities such as
MMD~\citep{gretton2012kernel}, while deep adaptation methods align
features or domains using adversarial losses, correlation matching, or
residual transfer~(e.g.,\cite{ganin2016domain,long2015learning,
  sun2016deep,long2016unsupervised}). Invariant and robust prediction
methods, including invariant risk minimization, risk extrapolation,
and anchor regression, instead penalize predictors whose performance
or residuals vary across observed environments or anchor
variables~\cite{arjovsky2019invariant,krueger2021out,rothenhausler2021anchor}.
All of these methods are based on a different observation model than
ours: they rely on observed environments, anchors, or multiple domains
that reveal how the distribution changes. In contrast, the \tilt
procedure requires only labeled source data and unlabeled target
covariates, with the target covariates serving as the adaptation
signal.

\myparagraph{Additive decompositions and residual transfer} Additive
shared-specific decompositions have been used in domain adaptation,
multi-task learning, and representation learning. Feature augmentation
methods introduce shared and domain-specific copies of
features~\citep{daume2007frustratingly}; multi-task models decompose
parameters into shared and task-specific
components~\citep{evgeniou2004regularized,jalali2010dirty}; and domain
separation networks split representations into shared and private
factors~\citep{bousmalis2016domain}. Residual transfer methods adapt a
source predictor by learning an additional residual component for the
target domain~\citep{kuzborskij2013stability,long2016unsupervised}, or
by using residuals to reduce bias in student--teacher
estimation~\citep{yamamoto2026residual}.  The \tilt method also
exploits an additive decomposition, but with the opposite deployment
logic: the auxiliary component is allowed to fit source-specific
residual structure while being penalized on target inputs, and is
discarded when constructing the final predictor. By design, this
choice yields an implicit covariate-shift correction rather than a
shared--private architectural bias.

\section{Problem and method formulation}
\label{SecPreliminaries}

We consider the unsupervised domain adaptation problem, with $\Xspace
\subset \mathbb{R}^d$ being the input space and $\Yspace \subset
\mathbb{R}$ the output space. We are given a source dataset $\{(x_i,
y_i)\}_{i=1}^\numsource$ consisting of $\numsource$ labeled examples
drawn i.i.d. from a source distribution $\Source$ over $\Xspace
\times \Yspace$. Additionally, we have access to a target dataset
$\{\tilde{x}_j\}_{j=1}^\numtarget$ consisting of $\numtarget$
unlabeled examples drawn i.i.d. from the marginal distribution
$\TargetX$ of a target distribution $\Target$.  In the covariate shift
setting, the marginal distributions of the inputs differ ($\SourceX
\neq \TargetX$), but the conditional distribution of the labels
remains invariant: $\Source(Y|X) = \Target(Y|X)$.

\subsection{\tilt: Target-induced loss tilting}
\label{SecMethods}

We begin by describing the \tilt procedure for real-valued
outputs $y \in \real$, and using the least-squares loss.
In this setting, the optimal predictor is given by the
regression function $\fstar(x) = \Exs[Y \mid X = x]$, and the
quality of an estimate $f$ is measured by the $\Qprob$-target
excess risk
\begin{align}
\label{EqnTargetRisk}  
\Errq(f) = \Exs_\Qprob \big(Y - f(\Xtil) \big)^2 - \Exs_\Qprob \big(Y
- \fstar(\Xtil) \big)^2 \; \equiv \Exs_\Qprob \big(f(\Xtil) -
\fstar(\Xtil) \big)^2
\end{align}

We introduce an additive decomposition of the form $f(x) + b(x)$,
where $f \in \Fclass$ is the predictor and $b \in \Bclass$ is an
auxiliary function. We propose to learn $f$ and $b$ simultaneously by
minimizing the following joint empirical objective:
\mygraybox{
  \begin{align}
 \label{EqnTILT}
    \mbox{\textbf{\tilt objective:}}\qquad \LossHat(f,b) \coloneqq
    \frac{1}{\numsource} \Sum{i}{\numsource} \left( f(x_i) - y_i +
    b(x_i) \right)^2 + \frac{\lambda}{\numtarget} \Sum{j}{\numtarget}
    b^2(\xtil_j).
\end{align}
}
\noindent
We jointly minimize $\LossHat$ over $(f,b)$ and use the $f$-component
of a minimizer as the final \tilt predictor.  Here the regularization
parameter $\lambda > 0$ controls the strength of bias suppression on
the target domain.  Note that the objective~\eqref{EqnTILT} is simple,
and remains jointly convex in the function values $f(x_i)$,
$\bfun(x_i)$, and $\bfun(\xtil_j)$.

\myparagraph{Theoretical under-pinning: excess risk and optimal
  offset} Our first result shows that the \tilt
objective~\eqref{EqnTILT} has a rigorous justification at the
population level of infinite sample size.  Define the population level
\tilt objective $\LossBar(f, \bfun) \defn \Exs[\LossHat(f, \bfun)]$,
where the expectation is taken over both source and target data, and
note that $\LossBar$ is minimized by the pair $(\fstar, 0)$, where
$\fstar(x) = \Exs[Y \mid X = x]$ is the optimal predictor.  The
\emph{auxiliary excess \tilt-risk} is given by
\begin{align}
\label{EqnDefnExcessRisk}
\Rbar(f, b) \defn \LossBar(f, \bfun) - \LossBar(\fstar, 0),
\end{align}
which compares the loss of a pair $(f, \bfun)$ relative to the oracle
optimum $(\fstar, 0)$. The following result shows minimizing the
auxiliary excess risk over the choice of $\bfun$ yields the
$\lambda$-weighted error function
\begin{align}
\label{EqnDefnErrlam}  
  \Errlam(f) & \defn \Exs_\Qprob \Big[ \frac{\pdens(x)}{\pdens(x) +
      \lampar \qdens(x)} \big(f(X) - \fstar(X) \big)^2 \Big]
\end{align}
and that moreover, the optimal offset function $\bstar_f$ has an
especially attractive and self-localized form.

\mygraybox{
\begin{proposition}[Excess risk and optimal offset]
\label{ThmRiskDecomp}
For any scalar $\lambda > 0$, function $f$ and density pair $(\pdens,
\qdens)$, we have the equivalence
\begin{subequations}
\begin{align}
\label{EqnOptExcessRisk}    
\inf_{\bfun} \tfrac{1}{\lampar} \Rbar(f, \bfun) & = \Errlam(f)
\end{align}
with the infimum achieved by the {\bf{optimal offset function}}
\begin{align}
\label{EqnOptAuxiliary}  
\bstar_f(x) \coloneqq - \Vlam(x) (f(x) - \fstar(x)) \qquad \mbox{where
  $\Vlam(x) \defn \frac{\pdens(x)}{(\pdens(x) + \lambda
    \qdens(x))}$.}
\end{align}
\end{subequations}
\end{proposition}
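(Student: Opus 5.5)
We plan to compute the population objective explicitly and then minimize over $\bfun$ pointwise. Taking expectations in~\eqref{EqnTILT}, the source term becomes $\Exs_{\Source}(f(X)+\bfun(X)-Y)^2$ and the target term becomes $\lambda\,\Exs_{\Qprob}\bfun^2(\Xtil)$. Conditioning on $X$ and using $\fstar(x)=\Exs[Y\mid X=x]$, the cross term involving the noise $Y-\fstar(X)$ vanishes. Writing $\Delta \defn f-\fstar$, we get
\begin{align*}
\LossBar(f,\bfun) = \Exs_{\Source}\big(Y-\fstar(X)\big)^2 + \int \Big\{ \pdens(x)\big(\Delta(x)+\bfun(x)\big)^2 + \lambda\,\qdens(x)\,\bfun^2(x)\Big\}\,dx .
\end{align*}
Since $\LossBar(\fstar,0)$ equals the noise term, it follows that $\Rbar(f,\bfun)$ is the integral of the pointwise quadratic $\phi_x(\beta) \defn \pdens(x)(\Delta(x)+\beta)^2+\lambda\qdens(x)\beta^2$ evaluated at $\beta=\bfun(x)$. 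Here we assume $\Delta$ and $\bfun$ are square integrable under both $\pdens$ and $\qdens$, so that these expectations are finite.

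Next we minimize $\phi_x$ separately at each $x$. Wherever $\pdens(x)+\lambda\qdens(x)>0$, the function $\phi_x$ is a strictly convex quadratic. Setting its derivative to zero gives $\beta^\star=-\frac{\pdens(x)}{\pdens(x)+\lambda\qdens(x)}\Delta(x)=-\Vlam(x)\Delta(x)$. The minimal value follows from the harmonic-mean identity $\min_\beta\{a(\Delta+\beta)^2+c\beta^2\}=\frac{ac}{a+c}\Delta^2$ with $a=\pdens(x)$ and $c=\lambda\qdens(x)$, which gives $\lambda\,\frac{\pdens(x)\qdens(x)}{\pdens(x)+\lambda\qdens(x)}\Delta(x)^2$. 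On the set where $\pdens=\qdens=0$, both the integrand and the value of $\bfun$ are irrelevant, so we set $\Vlam=0$ there by convention.

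The remaining step is to justify interchanging the infimum with the integral. For the lower bound, pointwise we have $\phi_x(\bfun(x))\ge\phi_x(\beta^\star(x))$ for every admissible $\bfun$, and integrating gives $\Rbar(f,\bfun)\ge\int\lambda\frac{\pdens\qdens}{\pdens+\lambda\qdens}\Delta^2 = \lambda\,\Errlam(f)$. For attainment, $\bstar_f$ is measurable and satisfies $|\bstar_f|\le|\Delta|$ because $0\le\Vlam\le 1$. It is therefore square integrable under both measures whenever $\Delta$ is, so $\bstar_f$ is admissible and achieves the bound. Dividing by $\lambda$ yields~\eqref{EqnOptExcessRisk} and~\eqref{EqnOptAuxiliary}. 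The main care point is exactly this measurability and integrability check, together with the handling of the regions where $\pdens$ or $\qdens$ vanishes, for example disjoint supports. The formula remains valid there: $\Vlam=1$ where only $\pdens>0$, $\Vlam=0$ where only $\qdens>0$, and in both cases the corresponding minimal value is $0$.
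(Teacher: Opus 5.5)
Your proof is correct and follows essentially the same route as the paper: both reduce $\Rbar(f,\bfun)$ to the integral of the pointwise quadratic $\pdens(\Delta+\bfun)^2+\lambda\qdens\bfun^2$ and minimize it pointwise. The paper completes the square to obtain the exact identity $\Rbar(f,\bfun)=\lambda\Errlam(f)+(1+\lambda)\|\bfun-\bstar_f\|_{\Source_\lambda}^2$, whose remainder it reuses in the oracle inequality and the ReLU corollary, whereas you use the first-order condition plus a lower-bound/attainment argument, with sound (and in the paper omitted) care about the noise cross term, integrability, and the regions where $\pdens$ or $\qdens$ vanishes.
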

}
\noindent See~\Cref{SecThmRiskDecomp} for the proof.

\Cref{ThmRiskDecomp} highlights several key advantages of \tilt over
classical density-ratio-based methods.   \\
\noindent{\bf{Correctness of profiled excess risk:}} Minimizing the
auxiliary excess risk $\Risk(\ffun, \bfun)$ over the offset term
$\bfun$ defines a profiled excess risk.
Equation~\eqref{EqnOptExcessRisk} shows that this profiled risk is
equivalent to $\Errlam(f)$, showing that \tilt directly targets the
relative reweighted risk without explicit density-ratio estimation.
Moreover, the risk $\Errlam(f)$ from equation~\eqref{EqnDefnErrlam}
converges to the $\Qprob$-target excess risk~\eqref{EqnTargetRisk} as
$\lambda$ decreases. \\
\begin{wrapfigure}{r}{0.56\textwidth}
  \vspace{-8pt}
  \centering
  \begingroup
  \setlength{\tabcolsep}{2pt}
  \begin{tabular}{@{}cc@{}}
    \includegraphics[keepaspectratio,height=0.24\textwidth]{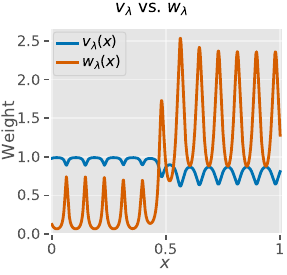} &
    \includegraphics[keepaspectratio,height=0.24\textwidth]{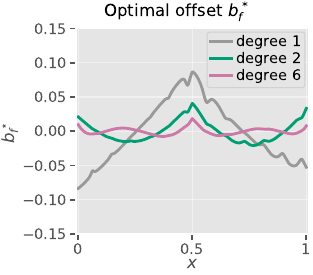}
  \end{tabular}
  \endgroup
  \caption{Left panel: The $\lambda$-smoothed density ratio $\Wlam(x)
    = \frac{q(x)}{p(x) + \lambda q(x)}$ is very spiky, compared to the
    relatively flat and well-behaved \tilt-weight $\Vlam(x) =
    \frac{p(x)}{p(x) + \lambda q(x)}$.  Right panel: Plots of the
    optimal offset function $\bstar_f$ for three different choices of
    $f$: linear, quadratic and degree six polynomial approximations to
    $\fstar$.  The optimal offset is much smaller and smoother than
    $\Wlam$; note the different scale of the $y$-axis.}
  \vspace{-8pt}
  \label{FigBstar}
\end{wrapfigure}
  \noindent{\bf{Uniform boundedness:}} The optimal offset $\bstar_f$
  does \emph{not} involve the density ratio, which can be very large
  or unbounded.  Instead, it involves the weight $\Vlam$ from
  equation~\eqref{EqnOptAuxiliary}, which takes values in $[0,1]$ for
  any choice of $(\pdens, \qdens, \lambda)$.  Thus, unlike explicit
  density-ratio estimation, \tilt never requires learning an unbounded
  or otherwise ill-behaved quantity.\\
  \noindent{\bf{Self-localizing property:}} Equally important is that the
  optimal offset $\bstar_f$ involves the product of $v_\lambda(x)$
  with the \emph{residual prediction error} $\Delta(x) = f(x) -
  \fstar(x)$.  Consequently, the offset $\bstar_f$ shrinks as the
  prediction error decreases, so that the auxiliary task becomes
  easier as $f$ improves.  This property corresponds to a form of
  self-localization: the optimal offset $\bstar_f$ is correctly
  localized around the current prediction error.

\subsection{Classification extension of \tilt}
\label{SecExtensions}

Thus far, we have described the \tilt method for the least-squares
setting.  Here we describe its extension to classification, which
underlies the CIFAR-100 distillation experiments described
in~\Cref{SecNumerical}. At a high level, the \kdtilt is simply the
knowledge-distillation analogue of \tilt: it keeps the same
source-side prediction objective, while adding a target-side
regularization that suppresses the auxiliary component on unlabeled
target inputs.

For a $K$-class logistic regression problem, we write the logits as
$f(x)+b(x)$, where $f,b:\Xspace\to\mathbb{R}^K$, and let
$\tau:\Xspace\to\mathbb{R}^K$ be teacher logits. Given a temperature
$T>0$ and a mixing weight $\beta\in[0,1]$, we define
\begin{align}
    \inf_{f,b} & \Biggr \{ \frac{1}{\numsource} \Sum{i}{\numsource}
    \Bigl[ (1-\beta)\, \ell_{\mathrm{CE}} \Bigl( y_i,\,
      \softmax\bigl(f(x_i)+b(x_i)\bigr) \Bigr) + \beta T^2
      \underbrace{\mathrm{KL} \Bigl( \softmax_T\bigl(\tau(x_i)\bigr)
        \,\Big\|\, \softmax_T\bigl((f(x_i)+b(x_i))\bigr)
        \Bigr)}_{\text{\kd loss}} \Bigr] \notag\\ & +
    \frac{\lambda}{\numtarget} \Sum{j}{\numtarget} T^2
    \underbrace{\mathrm{KL} \Bigl(
      \softmax_T\bigl((f(\xtil_j)+b(\xtil_j))\bigr) \,\Big\|\,
      \softmax_T\bigl(f(\xtil_j)\bigr) \Bigr)}_{\text{\tilt penalty}}
    \Biggr \},
    \label{EqForwardStabilizedSurrogate}
\end{align}
where $\softmax$ is the softmax function, and $\softmax_T(\cdot)\defn
\softmax(\cdot/T)$ is the softmax at temperature $T$.  The first two
terms are the standard supervised and soft distillation losses on the
source domain, whereas the last term is the \tilt component.
In~\Cref{SecGeneralization}, we derive the \kltilt procedure as a
direct KL-divergence analogue of \tilt, and show how the \kdtilt
objective is obtained by mixing it with the ordinary supervised
cross-entropy loss.

\section{Learning-theoretic guarantees}
\label{SecTheory}

Having laid out the population (infinite data) behavior of the \tilt
procedure, we now to turn its finite-sample analysis.  We give two
main results: ~\Cref{ThmExcessRisk} is an oracle inequality on the
excess risk that applies to a general pair $(\Fclass, \Bclass)$ of
function classes for the main and auxiliary fits, involving their
metric entropies.  \Cref{CorExcessRisk} provides an explicit rate by
specializing this general result to the case of sparse deep neural
nets.

The analysis in this section applies to the \tilt procedure for the
least-squares loss with real-valued response $Y \in \real$, and
optimal predictor $\fstar(x) \defn \Exs[Y \mid X = x]$.  We say that a
function class $\Fclass$ is $\bou$-uniformly bounded if $\|f\|_\infty
= \sup_{x \in \Xspace} |f(x)| \leq \bou$ for all $f \in \Fclass$.
Throughout this section, we impose the following {\bf{\emph{standard
    regularity}}} (SR) conditions:
  \begin{itemize}[
  noitemsep,
  topsep=0pt,
  leftmargin=*,
  align=left
    ]
  \item the function classes $\Fclass$ and $\Bclass$ are each $\bou$-uniformly bounded.
  \item the response noise $Y - \fstar(x)$ is sub-Gaussian with variance proxy $\sigma^2$.
  \end{itemize}
\noindent Recalling the \tilt objective function $\LossHat$ from
equation~\eqref{EqnTILT}, we let $(\fhat, \bhat) \in \arg
\min_{(\ffun, \bfun) \in \Fclass \times \Bclass} \LossHat(f, b)$
denote a joint minimizer, and the final \tilt estimator corresponds to
the function $\fhat$.

\myparagraph{Main excess risk bound} Our first result gives an oracle
inequality for the expected reweighted excess risk. The approximation
term measures how well $\Fclass$ and $\Bclass$ realize the population
decomposition, while the estimation error is quantified in terms of
metric entropy.  More precisely, for any $\delta > 0$, let $\log
\Num_\numsource \coloneqq \log \Num_\numsource(\delta; \Fclass +
\Bclass; L_\infty)$ and $\log \Num_\numtarget \coloneqq \log
\Num_\numtarget(\delta; \Bclass; L_\infty)$ denote the metric entropy
at scale $\delta$.  The following theorem gives a bound in terms of
the error metric $\Errlam(f)$ from equation~\eqref{EqnOptExcessRisk}.

\mygraybox{
\begin{theorem}[Non-asymptotic bound on $\Errlam(\fhat)$]
\label{ThmExcessRisk}
Under the SR conditions, there exists a universal constant $c > 0$
such that for any $s \in (0,1)$ and any $\delta \in (0,1)$, the \tilt
estimate $\fhat$ satisfies
\begin{align}
\label{EqnOracle}
 \Exs \big[ \Errlam(\fhat) \big] & \leq (1 + s) \underbrace{\inf_{f
     \in \Fclass, \bfun \in \Bclass} \tfrac{1}{\lambda} \Rbar(\ffun,
   \bfun)}_{\mbox{Approximation error}} + \frac{1}{s} \underbrace{c
   (B^2 + \sigma^2) \left( \frac{\log
     \Num_\numsource}{\lambda\numsource} + \frac{\log
     \Num_\numtarget}{\numtarget} \right)}_{\mbox{Estimation
     error}} + \MyRem(\delta),
\end{align}
where $\MyRem(\delta) = \frac{c'}{\lambda} (B + \sigma)\delta =
\mathcal{O}(\delta)$ is a remainder term.
\end{theorem}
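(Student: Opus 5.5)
The plan is to reduce everything to an excess-risk bound for the joint pair $(\fhat,\bhat)$ and then run a fast-rate ERM argument in which the source and target samples are handled separately. By \Cref{ThmRiskDecomp}, for every $\bfun$ we have $\Errlam(\fhat) \le \tfrac{1}{\lambda}\Rbar(\fhat,\bfun)$. In particular $\Errlam(\fhat) \le \tfrac{1}{\lambda}\Rbar(\fhat,\bhat)$, so it suffices to show
\[
\Exs[\Rbar(\fhat,\bhat)] \le (1+s)\inf_{\ffun,\bfun}\Rbar(\ffun,\bfun) + \tfrac{c}{s}(B^2+\sigma^2)\Big(\tfrac{\log\Num_\numsource}{\numsource}+\tfrac{\lambda\log\Num_\numtarget}{\numtarget}\Big) + c'(B+\sigma)\delta .
\]
Dividing by $\lambda$ then yields~\eqref{EqnOracle}. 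The target term picks up exactly the factor $\lambda$ that cancels, while the source term acquires the $1/\lambda$.

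Next I make the excess risk explicit. Write $g = \ffun+\bfun$ and $\varepsilon = Y-\fstar(X)$. Since $\Exs[\varepsilon\mid X]=0$, the population excess risk is
\[
\Rbar(\ffun,\bfun) = \Exs_\Pprob\big(g-\fstar\big)^2 + \lambda\,\Exs_\Qprob \bfun^2 .
\]
The empirical analogue $\widehat R(\ffun,\bfun) \defn \LossHat(\ffun,\bfun)-\LossHat(\fstar,0)$ equals
\[
\widehat R(\ffun,\bfun) = \tfrac1\numsource\sum_i (g-\fstar)^2(x_i) - \tfrac{2}{\numsource}\sum_i \varepsilon_i (g-\fstar)(x_i) + \tfrac{\lambda}{\numtarget}\sum_j \bfun^2(\xtil_j).
\]
Fix any comparator $(\ffun,\bfun)\in\Fclass\times\Bclass$. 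The basic inequality $\widehat R(\fhat,\bhat)\le \widehat R(\ffun,\bfun)$ gives
\[
\Rbar(\fhat,\bhat) \le \Rbar(\ffun,\bfun) + \big[\Rbar(\fhat,\bhat)-\widehat R(\fhat,\bhat)\big] - \big[\Rbar(\ffun,\bfun)-\widehat R(\ffun,\bfun)\big].
\]
The comparator bracket has mean zero, because $(\ffun,\bfun)$ is fixed. The work therefore lies in bounding the expected deviation at the data-dependent pair $(\fhat,\bhat)$.

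To control that deviation, I split it into three pieces:
\begin{itemize}[noitemsep,topsep=0pt,leftmargin=*]
\item the source quadratic deviation for $h = (g-\fstar)^2$;
\item the noise cross term $\tfrac{2}{\numsource}\sum_i\varepsilon_i(g-\fstar)(x_i)$;
\item the target quadratic deviation $\lambda\big(\Exs_\Qprob\bfun^2-\tfrac1\numtarget\sum_j\bfun^2(\xtil_j)\big)$.
\end{itemize}
Each piece is discretized with $\delta$-covers in $L_\infty$: a cover of $\Fclass+\Bclass$ for the first two pieces and a cover of $\Bclass$ for the third. Replacing $(\fhat,\bhat)$ by cover elements costs $O((B+\sigma)\delta)$ on the source side and $O(\lambda B\delta)$ on the target side. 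This uses uniform boundedness, and for the cross term it uses $\Exs|\varepsilon|\lesssim\sigma$. Both costs sit inside $\MyRem(\delta)$ after division by $\lambda$.

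For a fixed cover element I use a Bernstein-type bound that exploits the variance–mean relation, which is the key to obtaining a fast rate. The variance satisfies $\mathrm{Var}(h)\le 4B^2\,\Exs h$, and the cross term is sub-Gaussian with proxy $\sigma^2\,\Exs_\Pprob(g-\fstar)^2/\numsource$. This gives, for each element,
\[
\mathbb{P}\Big(\mathrm{dev} > \tfrac{s}{1+s}\,\Exs h + t\Big) \le \exp\!\big(-c\, s\,\numsource t/(B^2+\sigma^2)\big),
\]
and the analogous bound with $\numtarget$ and a factor $\lambda$ on the target side. A union bound over $\Num_\numsource$ and $\Num_\numtarget$ elements, followed by integrating the tail, shows that the expected deviation is at most
\[
\tfrac{s}{1+s}\Rbar(\fhat,\bhat) + \tfrac{c}{s}(B^2+\sigma^2)\Big(\tfrac{\log\Num_\numsource}{\numsource}+\tfrac{\lambda\log\Num_\numtarget}{\numtarget}\Big) + O\big((B+\sigma)\delta\big).
\]
I then absorb the $\tfrac{s}{1+s}\Rbar(\fhat,\bhat)$ term into the left-hand side and rescale. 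Taking the infimum over comparators yields the claimed factor $(1+s)$ together with the $1/s$ estimation term.

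The main obstacle is the noise cross term at the random minimizer. The noise $\varepsilon_i$ is unbounded, and the offset $\bhat$ is coupled to $\fhat$, so the bound must be an "offset" or ratio-type inequality, uniform over the cover, of the form
\[
\sup_{\mathrm{cover}}\Big\{\tfrac2\numsource\sum_i\varepsilon_i u(x_i) - \tfrac{s}{2}\,\Exs_\Pprob u^2\Big\}, \qquad u = g-\fstar .
\]
I would first try a Chernoff bound conditional on the $x_i$. This gives $\exp\!\big(\theta\tfrac2\numsource\sum_i \varepsilon_i u(x_i)\big)\le\exp\!\big(2\theta^2\sigma^2\tfrac{1}{\numsource^2}\sum_i u^2(x_i)\big)$, after which I choose $\theta\asymp s\numsource/\sigma^2$ so that the empirical $\ell_2$ norm is cancelled by the quadratic penalty. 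The empirical norm must then be compared with the population norm using the same Bernstein step as for the quadratic term. This is where the $\sigma^2$ constant and the factor $1/s$ arise, and it is the step most likely to require careful bookkeeping of constants.
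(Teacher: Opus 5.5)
Your proposal is correct, but it is organized differently from the paper's proof. Both arguments start from $\lambda\Errlam(\fhat)\le\Rbar(\fhat,\bhat)$ and from the basic inequality for the empirical minimizer.

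The paper then routes through the noise-free empirical risk $\RiskHat(\fhat,\bhat)$ and proves two separate lemmas:
\begin{itemize}
\item \Cref{LemGeneralization} compares $\ERisk(\fhat,\bhat)$ with $\ERiskHat(\fhat,\bhat)$. It applies the ratio-type concentration bound of \Cref{LemConcentration} separately to the source and target samples and merges the two with Cauchy--Schwarz.
\item \Cref{LemNoiseInteraction} handles the noise term conditionally on the design. It uses an empirical-$L_\infty$ cover and self-normalized sub-Gaussian maxima, so the cross term is measured against $\sqrt{\ERiskHat(\fhat,\bhat)}$ rather than the population norm.
\end{itemize}
Each lemma closes with its own Young step, and the two slacks are then set to $\epsilon/3$.

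You instead run a single localized argument directly at the population level:
\begin{itemize}
\item Bernstein bounds with the variance--mean relation for the two quadratic deviations;
\item a conditional Chernoff bound for the cross term, where $\theta\asymp s\numsource/\sigma^2$ cancels the empirical norm exactly, followed by an upper-tail Bernstein bound $\|u\|_n^2\le 2\|u\|_{\Prob}^2+t$;
\item one union bound over fixed $L_\infty$ covers, taken separately for $\Fclass+\Bclass$ and for $\Bclass$ (legitimate, since the source and target deviations decouple);
\item tail integration, and a single absorption step that produces the $(1+s)$ and $1/s$ factors at once.
\end{itemize}
Your route is self-contained and gives cleaner constants. It does genuinely need data-independent sup-norm covers, so that each cover element is a fixed function for Bernstein; this matches the $L_\infty$ entropies in the statement. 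The paper's modular route reuses existing lemmas, and its noise step tolerates data-dependent covers. The price is an extra intermediate quantity and two rounds of Young's inequality.

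Two small bookkeeping points. First, the relation $\mathrm{Var}(h)\lesssim B^2\,\Exs h$ needs $|g-\fstar|\le 3B$, that is, the implicit assumption $\|\fstar\|_\infty\le B$, which the paper also uses. Your constant $4B^2$ should therefore be $9B^2$.

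Second, the target-side discretization cost $O(\lambda B\delta)$ becomes $O(B\delta)$ after dividing by $\lambda$. This is not dominated by $\tfrac{c'}{\lambda}(B+\sigma)\delta$ when $\lambda>1$, so the remainder is really of order $(1+\lambda^{-1})(B+\sigma)\delta$. The paper's proof makes the same slip: it records $78\delta B+26\lambda\delta B$ as $104\delta B$. The issue affects only the $\mathcal{O}(\delta)$ term.
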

}
\noindent
See~\Cref{AppThmExcessRisk} for the proof. A few comments on this
result are in order:
  \begin{itemize}[
  noitemsep,
  topsep=0pt,
  leftmargin=*,
  align=left
    ]
\item The first term in the bound~\eqref{EqnOracle} represents
  approximation error, as measured by the best performing pair
  $(\ffun, \bfun)$ for the auxiliary excess risk $\Risk$ from
  equation~\eqref{EqnDefnExcessRisk}.  From~\Cref{ThmRiskDecomp}, if
  the auxiliary class $\Bclass$ is rich enough to include $\bstar_f$
  (cf. equation~\eqref{EqnOptAuxiliary}) for each $f \in \Fclass$,
  then this approximation error takes the form $\inf_{f \in \Fclass}
  \Errlam(f)$, which is directly comparable to the left-hand side.
\item The parameter $\lambda$ controls the bias--variance tradeoff:
  smaller $\lambda$ moves the criterion toward the
  $L^2(\Qprob)$-target risk $\mathcal{E}_0(f) \equiv \Exs_{\Qprob}
  \big[(f(X) - \fstar(X))^2\big]$, while the bound pays for this
  through the source complexity term $\log \Num_\numsource$ scaled by
  $1/\lambda$.

  \end{itemize}

\myparagraph{ReLU network excess risk bounds} \Cref{ThmExcessRisk} is
a general result that applies to any $\bou$-bounded pair of function
classes $(\Fclass, \Bclass)$.  We can obtain explicit rates of
convergence by specializing it to particular classes, and here we
describe such a result for sparse clipped ReLU networks.  Our analysis
converts the general oracle inequality~\eqref{EqnOracle} into an
explicit nonparametric rate for this network class.

Let $\NNs(L,\mathbf{p},s, \bou)$ denote the class of neural networks
with $L$ hidden layers, width vector $\mathbf{p}$, at most $s$ nonzero
parameters, and final output bounded by $\bou$; see~\Cref{DefNN}
in~\Cref{AppCorExcessRisk} for details.  For pairs $(\mathbf{p}_f,
s_f)$ and $(\mathbf{p}_g, s_g)$ to be chosen, our theory applies to
the pair
\begin{subequations}
\begin{align}
  \label{EqnNeuralNets}
\mbox{\underline{Main class:}} \quad \Fclass = \NNs(L_f, \mathbf{p}_f,
s_f, \bou), \qquad \mbox{\underline{Auxiliary class}} \quad \Bclass =
\NNs(L_g, \mathbf{p}_g, s_g, \bou).
\end{align}
As for the data-generating model, we impose H\"{o}lder smoothness
conditions on the true regression function $\fstar: \real^d
\rightarrow \real$, and the bounded weight $\Vlam(x) =
\frac{p(x)}{p(x) + \lambda q(x)}$.  In particular, for smoothness
parameters $\beta, \gamma > 0$ and a radius $K$, we assume that
\begin{align}
\label{EqnDGM}  
  \fstar \in C_d^\beta([0,1]^d,K), \quad \mbox{and} \quad \Vlam=
  \pdens/(\pdens+\lambda\qdens) \in C_d^\gamma([0,1]^d,K),
\end{align}
and moreover that $\fstar$ is $K$-bounded with $K \le \bou$.
\end{subequations}
See~\Cref{DefHolder} in~\Cref{AppCorExcessRisk} for details of the
H\"older ball $C_d^\alpha([0,1]^d,K)$.

\mygraybox{
  \begin{corollary}[Excess risk bound for neural net classes]
\label{CorExcessRisk}
Under conditions~\eqref{EqnDGM} on the pair $(\fstar, \Vlam)$,
consider the \tilt procedure implemented with the neural net
classes~\eqref{EqnNeuralNets} with parameters
\begin{subequations}
\begin{align}
    s_f & \asymp \big(\kaplam \nlam\big)^{\frac{d}{2\beta+d}},
    \qquad s_g\asymp s_f +\big(\kaplam
    \nlam\big)^{\frac{d}{2\gamma+d}}, \qquad L_f \asymp L_g \asymp
    \log(\kaplam \nlam).
\end{align}
where $\nlam \coloneqq \min \{\lambda \numsource, \numtarget \}$,
and $\kaplam \coloneqq (1+\lambda)/\lambda$.  Then the \tilt estimate
$\fhat$ satisfies the following risk bound, with the displayed
$\lambda$-dependence entering through $\nlam$ and $\kaplam$:
\begin{align}
\label{EqnNnetRisk}  
    \Exs\Errlam(\fhat) & \leq c_0 (K^2 + \sigma^2) \Big \{
    \kaplam^{\frac{d}{2\beta+d}} \nlam^{-\frac{2\beta}{2\beta+d}} +
    \kaplam^{\frac{d}{2\gamma+d}} \nlam^{-\frac{2\gamma}{2\gamma+d}}
    \Big \} \; \big(\log(\kaplam \nlam)\big)^3
\end{align}
\end{subequations}
Here $c_0$ may depend on $d,\beta,\gamma,\bou$, but not on
$\lambda,\numsource$ or $\numtarget$.
\end{corollary}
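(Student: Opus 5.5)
The plan is to apply \Cref{ThmExcessRisk} with a fixed choice such as $s=1/2$. We then bound its two main terms separately for the network classes~\eqref{EqnNeuralNets}, and choose $\delta \asymp 1/(\kaplam \nlam)$ so that the remainder $\MyRem(\delta) = \mathcal{O}(\delta/\lambda)$ is dominated by the displayed rate. We need $\delta/\lambda \lesssim 1/(\lambda\kaplam\nlam) = 1/((1+\lambda)\nlam)$, which is of lower order than the main terms.

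\emph{Estimation term.} We use the standard covering bound for sparse clipped ReLU networks (Schmidt-Hieber): $\log \Num(\delta; \NNs(L,\mathbf{p},s,\bou); L_\infty) \lesssim s L \log(s L/\delta)$ when the widths are polynomial in $s$. For the sum class $\Fclass+\Bclass$, we cover each summand at scale $\delta/2$, so that
\[
\log\Num_\numsource \lesssim (s_f+s_g) L \log(\kaplam\nlam).
\]
With $L \asymp \log(\kaplam\nlam)$, both entropies are at most $(s_f+s_g)\log^2(\kaplam\nlam)$. Since $\lambda\numsource \ge \nlam$ and $\numtarget \ge \nlam$, the estimation term is $\lesssim (B^2+\sigma^2)(s_f+s_g)\log^2(\kaplam\nlam)/\nlam$.

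\emph{Approximation term.} We bound $\inf_{f,b}\frac1\lambda\Rbar(f,b)$ from above by a specific pair. A direct expansion (the same computation that underlies \Cref{ThmRiskDecomp}) gives, with $\Delta=f-\fstar$,
\[
\Rbar(f,b) = \Exs_P(\Delta+b)^2 + \lambda\Exs_Q b^2 .
\]
We take $f$ to be a network approximating $\fstar$ with $\|f-\fstar\|_\infty \lesssim \epsilon_f$, using $s_f \asymp \epsilon_f^{-d/\beta}\log(1/\epsilon_f)$. We then take $b$ to be a network approximating $\bstar_f=-\Vlam\Delta$. Evaluating at $\bstar_f$ gives exactly $\lambda\Errlam(f) \le \lambda\|\Delta\|_\infty^2$. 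Perturbing $b=\bstar_f+e$ adds at most $(1+\lambda)\|e\|_\infty^2 + 2\|e\|_\infty\cdot(\text{cross terms})$. The cross terms actually vanish by the first-order optimality of $\bstar_f$, so the perturbation only adds $(1+\lambda)\|e\|_\infty^2$. Dividing by $\lambda$ gives
\[
\text{approx} \le \epsilon_f^2 + \kaplam\|e\|_\infty^2 .
\]

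\emph{Constructing $b$ (the main obstacle).} We build $b$ as $-\,\mathrm{mult}(\tilde v, f - \tilde f)$. Here $\tilde v$ is a network approximating $\Vlam \in C^\gamma$ to accuracy $\epsilon_v$ using $\asymp \epsilon_v^{-d/\gamma}$ parameters, and $\tilde f$ is a second approximant of $\fstar$. Rather than using $\fstar$ itself, we reuse the parameters of $f$ and add a finer approximant of $\fstar$, or simply a copy, which is why $s_g \asymp s_f + (\ldots)$. The factor $\mathrm{mult}$ is the Yarotsky/Schmidt-Hieber product network with logarithmic size. The resulting error is
\[
\|e\|_\infty \lesssim \epsilon_v\|\Delta\|_\infty + \|\fstar-\tilde f\|_\infty + \text{mult error}.
\]
The delicate point is that $b$ must nearly cancel $\Delta$, so $\tilde f$ needs accuracy $\epsilon_f$ at the $\kaplam$ scale. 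We therefore choose the accuracy so that $\kaplam\epsilon_f^2 \asymp$ the target rate, and handle $\epsilon_v$ analogously, with $\kaplam\epsilon_v^2\|\Delta\|^2$ even smaller. Balancing $\kaplam\epsilon_f^2$ against $s_f/\nlam$ with $s_f\asymp\epsilon_f^{-d/\beta}$ gives $\epsilon_f^{2} \asymp (\kaplam\nlam)^{-2\beta/(2\beta+d)}$. This yields
\[
\kaplam\epsilon_f^2 \asymp \kaplam^{d/(2\beta+d)}\nlam^{-2\beta/(2\beta+d)}
\]
and $s_f/\nlam$ of the same order. Analogously, the $\gamma$ term of the bound comes from conservatively bounding $\kaplam\epsilon_v^2 K^2$. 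Depth $\log(\kaplam\nlam)$ suffices for these approximation constructions. Combining the approximation and estimation bounds, with the extra log factor coming from $L$ inside the entropy bound, gives~\eqref{EqnNnetRisk} with $(\log)^3$.
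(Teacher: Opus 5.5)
Your proposal is correct and follows essentially the same route as the paper. You apply \Cref{ThmExcessRisk} with $s=1/2$, bound both entropies by $(s_f+s_g)\log^2(\kaplam\nlam)$ via the Schmidt-Hieber sparse-ReLU covering bound, control the approximation term through $\tfrac{1}{\lambda}\Rbar(f,b)=\Errlam(f)+\kaplam\|b-\bstar_f\|_{\Source_\lambda}^2$ (your vanishing cross term is exactly identity~\eqref{EqnOptimalRisk}) with $b=-\mathrm{mult}(\tilde v, f-\tilde f)$ as in \Cref{LemJointApprox}, and then balance; the remaining differences (sup-norm instead of $\Source_\lambda$-norm control, $\delta\asymp(\kaplam\nlam)^{-1}$ instead of $(\kaplam\nlam)^{-A}$, and attributing the third logarithm to $L$ even though $L$ was already counted, when it really comes from the $\log(1/\epsilon_f)$ sparsity overhead or is just slack) are cosmetic.
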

}
The effective sample size is $\nlam$ because the auxiliary class
is controlled by both the source and target covering terms.  The
source term is the usual approximation-estimation rate for the
$\beta$-smooth predictor $\fstar$, while the target term is the
corresponding rate for the $\gamma$-smooth function $\Vlam$
controlling the auxiliary correction. A key property is that the
result \emph{does not} require the raw\footnote{In addition, the
$\lambda$-regularized version $q/(p + \lambda q)$ is bounded, but can
take values as large as $1/\lambda$ , in contrast to the uniform bound
$\|\Vlam\|_\infty \leq 1$.} density ratio $\qdens/\pdens$ to be
bounded or smooth.

\section{Numerical results}
\label{SecNumerical}

\subsection{Synthetic regression experiments}

The synthetic experiments are designed to isolate the mechanism of
\tilt in settings where the source and target distributions are known.
For a new loss-tilting mechanism, this control is essential: it
exposes the behavior of the objective apart from density-ratio
estimation error.  We use a Beta covariate shift in one dimension and
a Gaussian covariate shift in higher dimension, followed by a
point-mass nonparametric rate experiment.  We compare source ERM, exact
importance weighting (\iw), exact relative least-squares importance
fitting (\rl), kernel-estimated \rl when applicable, and \tilt.  The
exact \iw and \rl baselines use the known source and target
distributions; no density-ratio estimation or clipping is used for
these exact baselines.

\myparagraph{Linear misspecification} In the linear experiment, the
target predictor $f$ is a degree-three linear model in shifted
orthonormal Legendre features, while the auxiliary component $b$ is a
Gaussian-kernel ridge expansion on uniform centers.  The target
distribution is fixed at a Beta$(2,5)$ endpoint and the source
distribution is swept from the matched case toward a Beta$(5,2)$
endpoint over 21 corruption levels.  For each level we draw $320$
source samples, $320$ target samples, and $12{,}000$ test points, and
report means over $100$ trials.  All least-squares subproblems are
solved by ridge regression, and $\lambda$ is swept on a logarithmic
grid from $10^{-6}$ to $10^4$.  The response contains a degree-three
base component plus a localized high-frequency residual, chosen
deliberately so that source ERM with the misspecified linear class
extrapolates poorly into the target region.

From \Cref{FigSyntheticTiltLinearAB}\textbf{A}, the exact \iw
degrades, and exact \rl improves over \iw after corruption level
$0.5$, but \tilt remains the best method shown throughout the sweep.
Panel \textbf{B} shows the same \tilt estimator as a function of
$\lambda$ under a larger shift: the best values move to an
intermediate regime, visualizing the bias--variance tradeoff induced
by the \tilt objective.  As $\lambda \to \infty$, the target penalty
suppresses the auxiliary component $b$, and the estimator approaches
the source-ERM reference. For intermediate values of $\lambda$, the
target-side correction improves performance without exhibiting the
instability suggested by the $1/\lambda$ source-complexity term in the
bound.

\paragraph{High-dimensional neural misspecification}
We next test a high-dimensional Gaussian covariate-shift problem in
which the deployed target class is deliberately misspecified.  The
target distribution is fixed at $N(0,I_{4096})$, while the source
distribution interpolates from the matched case to
$N(\mu,I_{4096})$, where $\|\mu\|_2=3$ and the shift lies in a
$512$-dimensional coordinate subspace.  The regression function
contains many target-relevant ridge components together with a
source-local residual, so a weak predictor cannot represent the true
conditional mean.  Across all methods the deployed predictor $f$ is
the same one-hidden-layer ReLU network of width $16$, with effective
weight decay fixed at $10^{-5}$.  \tilt uses only an additional
auxiliary network $b$ with three hidden layers of width $256$.

For each shift level we use $1024$ labeled source samples, $1024$
unlabeled target samples for the \tilt penalty and density-ratio
estimation, $128$ labeled target validation samples for model
selection, and an independent target test set of size $8192$, averaged
over $20$ trials.  The validation labels are used only to select
hyperparameters.  Source ERM and \iw select the early-stopping epoch
from $\{1100,2200,4400\}$.  Exact \rl and kernel-estimated \rl select
$\lambda \in \{0.1,1,10,100\}$ and the same early-stopping grid.
\tilt selects $\lambda \in \{0.1,1,10,100\}$, the auxiliary effective
weight decay for $b$ from $\{10^{-7},10^{-6},10^{-5}\}$, and the same
early-stopping grid.  The architecture and effective weight decay of
$f$ are fixed throughout.

\Cref{FigSyntheticTiltLinearAB}C compares the resulting target-test
MSEs after target-validation tuning.  At zero shift, source ERM,
exact \iw, and \tilt essentially coincide, as they should in the
matched distribution.  As the shift increases, exact \iw deteriorates
rapidly because the ordinary density ratio becomes increasingly
variable.  \tilt remains the most stable method across the positive
shift levels and gives the clearest gains as the shift grows.

\begin{figure}[h!]
  \begin{center}
  \vspace{-4pt}
  \figpanel{0.33\textwidth}{A}{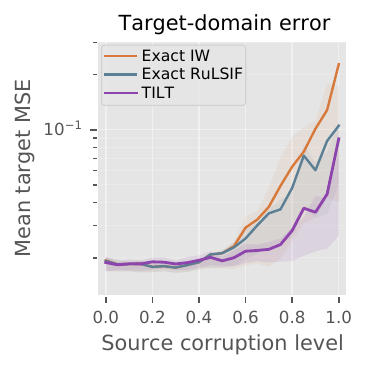}\hfill
  \hspace{-3pt}
  \figpanel{0.33\textwidth}{B}{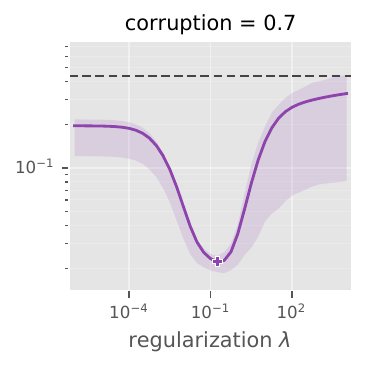}\hfill
  \hspace{-3pt}
  \figpanel{0.33\textwidth}{C}{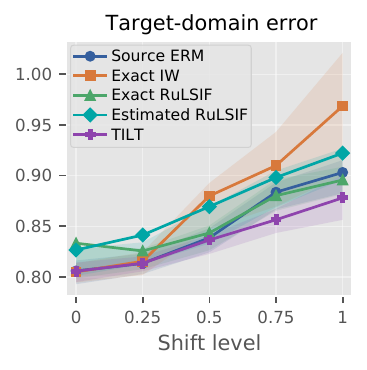}
  \vspace{-4pt}
  \caption{\textbf{\tilt improves target-domain regression under
      covariate shift.}  \textbf{A} In the one-dimensional
    misspecified linear-regression problem, target-test MSE is
    reported as the source distribution moves away from the fixed
    target distribution.  Exact importance weighting (\iw), exact
    relative least-squares importance fitting (\rl), and \tilt are
    shown; source ERM is omitted for scale.  \textbf{B} In the same
    linear setting at corruption level $0.7$, the target-test MSE of
    \tilt is plotted as a function of $\lambda$, with the source ERM
    reference shown for comparison.  \textbf{C} In the
    $4096$-dimensional neural problem, target-test MSE is reported as
    the Gaussian source mean moves away from the fixed target
    distribution.  All methods in \textbf{C} use the same weak ReLU
    target class and the same effective weight decay for the deployed
    predictor $f$; a small labeled target validation set is used only
    to tune allowed hyperparameters.  Curves show means over trials
    and shaded bands show interquartile ranges.}
  \label{FigSyntheticTiltLinearAB}
  \end{center}
  \vspace{-8pt}
\end{figure}

\myparagraph{Minimax point-mass lower-bound experiment} Finally, we
test whether \tilt recovers the effective-sample-size rate predicted
by bounded-density-ratio minimax lower bounds for nonparametric
covariate shift~\citep{pathak2022new}.  We take
$\TargetX=\mathrm{Unif}[0,1]$ and
$\Source_L = L^{-1}\mathrm{Unif}[0,1] + (1-L^{-1})\delta_0$, so
$\|d\qdens/d\pdens_L\|_{\infty}=L$ is tight.  This is a worst-case
shift for the sine-feature model because all sine features vanish at
$x=0$, i.e., $f(0)=0$, making atom samples uninformative about the target function.
The true regression function is
\begin{align*}
  f^*(x)=\sum_{k\geq 1} \theta_k \sqrt{2}\sin(\pi kx),
  \qquad\mbox{with} \quad
  \theta_k \asymp (-1)^{k-1} k^{-(\beta+1/2)},
\end{align*}
with $\beta=2$. Source responses are observed with additive Gaussian
noise, $Y=f^*(X)+\xi$ with $\xi\sim N(0,0.2^2)$, and we report target
MSE against the noiseless regression function $f^*$.  We use the first $D_F\asymp
(n/L)^{1/(2\beta+1)}$ sine features for $\Fclass$, a richer sine span for
$\Bclass\supseteq\Fclass$, and tune both $\Bclass$ and $\lambda$
oracle-wise. In this case, given $d=1$ and $\beta=2.0$, the minimax target-MSE rate is $(n/L)^{-\frac{2\beta}{2\beta+1}}=(n/L)^{-4/5}$~\citep{MaPatWai23}.

\begin{figure}[t]
  \centering
  \includegraphics[width=0.72\textwidth]{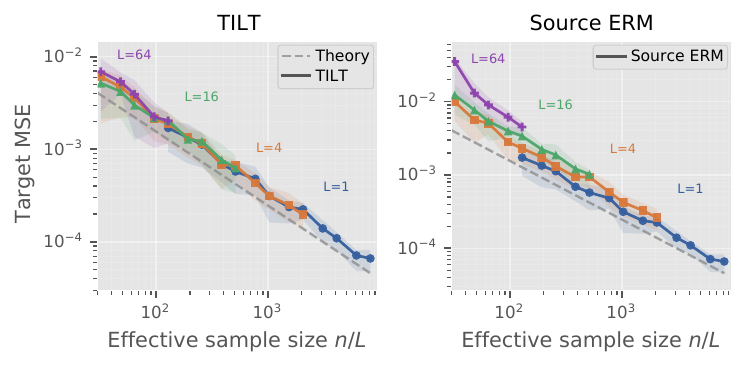}
  \caption{\textbf{Point-mass nonparametric rate.}  Left:
    oracle-tuned \tilt under
    $\Source_L=L^{-1}\mathrm{Unif}[0,1]+(1-L^{-1})\delta_0$ for a
    $\beta=2$ sine-series regression function.  Right: source ERM on
    the same $n/L$ axis ascends as $L$ increases.  The dotted line has slope
    $(n/L)^{-4/5}$.}
  \label{FigSyntheticPointMassNonparametricRate}
\end{figure}

\Cref{FigSyntheticPointMassNonparametricRate} compares \tilt with
source ERM on the same $n/L$ axis.  \tilt aligns with the
$(n/L)^{-4/5}$ theory slope, while source ERM does not align because
it scales as $L n^{-4/5}$ in this construction. Since $L^{4/5}<L$ for
$L>1$, $\tilt$'s minimax scaling is better than the linear dependence
on $L$ of source ERM in the shifted regimes, showing that the \tilt
objective can effectively mitigate the worst-case effect of the large
density ratio in this problem.

\subsection{Covariate shift in CIFAR-100}

\paragraph{Separation between \tilt and \kd}

For CIFAR-100 we use a target-side image corruption intended to mimic
poor acquisition conditions: as the target shift grows, images become
darker, more contrasted, less saturated, and increasingly blurred as
shown in \Cref{FigCifar100}A.  The source domain is clean CIFAR-100.
A ResNet-20 teacher is trained from scratch on the source, and the
tilted variants use a ResNet-20 auxiliary logit model $b$, also
without pretraining.  The deployed target/student model $f$ is an
intentionally weak CNN with about $1.4\times10^4$ parameters, whereas
ResNet-20 has about $2.8\times10^5$ parameters, roughly a $20$-fold
gap.  This makes the student misspecified enough that covariate shift
has a visible effect.  Because $b$ is added inside the logits, we
mean-center the auxiliary logits across classes so that their
class-wise sum is zero and the decomposition does not collapse through
an arbitrary logit shift.

We compare source ERM, vanilla knowledge distillation
(\kd)\cite{hinton2015distilling}, and the two tilted variants \kdtilt
and \kltilt.  The teacher is trained for $160$ epochs and
students for $100$ epochs using SGD with momentum; distillation uses
temperature $T=2.0$ and mixing weight $\beta=0.5$.  Target shift
strength is swept over $\{0,0.33,0.66,1.0,1.33,1.66\}$, omitting the
intermediate $0.83$ run.  The method-comparison curves aggregate ten
seeds, and the $\lambda$-sensitivity curve aggregates seven \kltilt
seeds.  The $\lambda$-sensitivity panel uses separate \kltilt runs at
shift strengths $0$ and $1.5$.  The penalty strength $\lambda$ is
selected per shift strength using labeled target validation data, from
the swept grids for \kdtilt and \kltilt.

\Cref{FigCifar100} shows that a small amount of corruption is enough
for the tilted distillation objectives to separate from the baselines.
In \Cref{FigCifar100}B, \kdtilt and \kltilt are close to the baselines
at small shifts but overtake source ERM and vanilla \kd once the shift
strength becomes moderate.  The target-test cross-entropy in
\Cref{FigCifar100}C makes the same effect more pronounced.  At zero
shift, \kltilt pays a small cost relative to source ERM and vanilla
\kd, while \kdtilt remains comparable, as expected when little
target-side debiasing is needed.  As
the shift grows, both \kdtilt and \kltilt obtain much lower loss than
source ERM and vanilla \kd.  \kdtilt is consistently slightly better
than \kltilt, but \kltilt is also stable and substantially improves
over the non-tilted baselines.  These trends suggest that target-side
debiasing can compensate for a large capacity gap even when the
auxiliary model has the same architecture as the teacher.

\Cref{FigCifar100}D plots the target-test cross-entropy of \kltilt as
a function of $\lambda$ at small and large target shifts.  The finite
curves are relatively flat across a broad intermediate range of
$\lambda$ in both regimes, showing that the method is not sensitive to
precise tuning within that range.  At the same time, performance
degrades or becomes numerically unstable when $\lambda$ is made
extremely small or extremely large, matching the synthetic
experiments: useful target-side debiasing requires an intermediate
amount of regularization rather than either pure importance weighting
or the source-ERM limit.  The dashed source-ERM references show that,
under the larger shift, this robust range remains well below the
source-only loss.

\begin{figure}[h!]
  \vspace{-4pt}
  \begin{center}
  \figpanelbox{0.95\textwidth}{A}{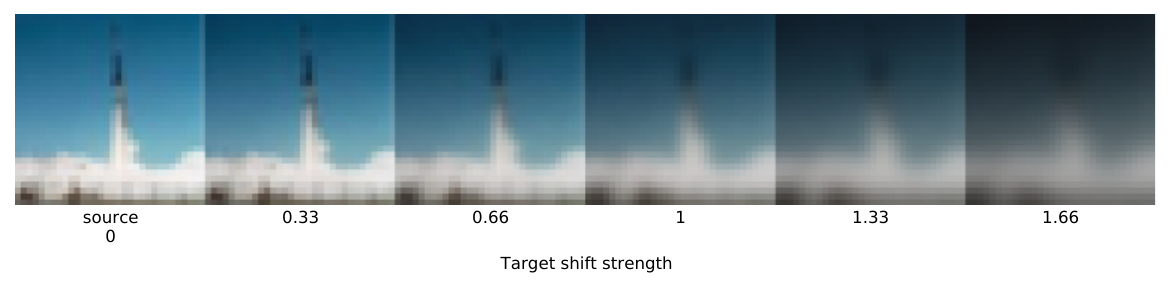}
  \par\vspace{0.35em}
  \vspace{-8pt}
  \figpanel{0.315\textwidth}{B}{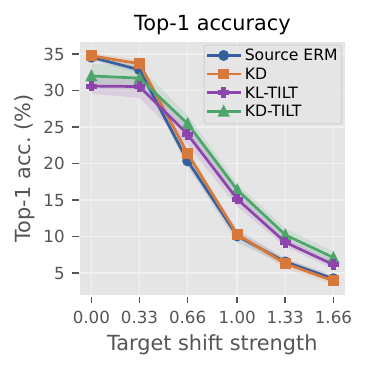}\hfill
  \figpanel{0.315\textwidth}{C}{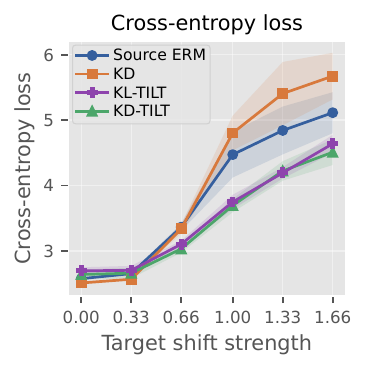}\hfill
  \figpanel{0.315\textwidth}{D}{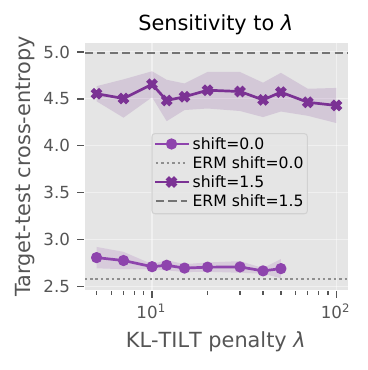}
  \caption{\textbf{\kdtilt and \kltilt consistently improve
      shifted-test performance over source-only baselines on
      CIFAR-100.}  A ResNet-20 teacher is trained on clean CIFAR-100
    (source), and a weak CNN student is distilled under covariate
    shift induced by a deterministic combination of brightness,
    contrast, color, and Gaussian-blur perturbations on the target
    domain, parameterized by the target shift strength on the
    horizontal axis.  \textbf{A} shows the deterministic target-shift
    path for one representative CIFAR-100 test image with the label
    ``rocket''; the leftmost image is the clean source image at shift
    strength $0$.  \textbf{B} reports top-1 accuracy on the shifted
    target-test set; \textbf{C} reports the corresponding target-test
    cross-entropy loss.  \textbf{D} shows the target-test
	    cross-entropy of \kltilt as a function of $\lambda$ at shift
	    strengths $0$ and $1.5$; dashed horizontal lines mark the
	    corresponding source-ERM losses, and missing large-$\lambda$
	    points indicate NaN training runs.  We compare \textbf{Source ERM}
    (student trained with cross-entropy on source only), \textbf{\kd}
    (vanilla knowledge distillation), \textbf{\kdtilt} (KD with the
    \tilt target-side regularizer), and \textbf{\kltilt} (the
    KL-divergence tilted variant).  The penalty strength $\lambda$ for
    \kdtilt and \kltilt is selected per shift strength on a labeled
    target-validation split.  Solid curves show means over seeds and
    shaded bands indicate $\pm1$ standard deviation; \textbf{B} and
    \textbf{C} use ten seeds, while \textbf{D} uses seven seeds.  Both
    \tilt variants clearly dominate Source ERM and \kd at
    moderate-to-high shift strengths, and the \kltilt loss curve
    remains robust over a broad intermediate range of $\lambda$
    values.}
 \label{FigCifar100}
\end{center}
\vspace{-8pt}
\end{figure}
\vspace{-4pt}

\section{Discussion}
\label{SecDiscussionLimitations}
\vspace{-4pt}

We introduced \tilt, a one-step method for unsupervised
covariate-shift adaptation that uses unlabeled target covariates to
regularize an auxiliary component rather than to estimate an explicit
density ratio. The main conceptual point is that a simple additive
decomposition, together with a target-side penalty on the auxiliary
term, induces a profiled objective equivalent to a relative weighted
target excess risk.  Instead of learning a potentially unstable ratio,
the auxiliary component learns a bounded, error-localized correction.

Several directions for future work remain. First, the current
finite-sample theory applies to least-squares regression under
covariate shift, with bounded function classes and sub-Gaussian
noise. The classification variants follow the same algorithmic
principle, but extending the finite-sample guarantees to general
Bregman losses remains to be done.  Second, the method introduces an
auxiliary model and at least one regularization parameter. Although
\tilt avoids explicit density-ratio estimation, it does not eliminate
model-selection issues, including $\lambda$ and the function classes
$(\Fclass, \Bclass)$.  Finally, the empirical evaluation is still
controlled. The synthetic experiments isolate the mechanism, and the
CIFAR-100 shift is generated by deterministic image corruptions. These
experiments show that \tilt can work under substantial covariate
shift, but broader evaluation on natural distribution shifts is needed
to characterize its practical failure modes.

\subsubsection*{Acknowledgements}  This work was partially funded by National Science
Foundation Grant NSF DMS-2311072; Office of Naval Reseach ONR Grant
N00014026-1-2116, and the Ford Professorship to MJW.  KY was supported
by the Takenaka Scholarship Foundation.

\PrintPaperBibliography

\newpage
\appendix

\begin{center}
    \LARGE \textbf{Appendix for ``\tilt: Target-induced loss tilting
      under covariate shift''}
\end{center}

\section{Proof of~\Cref{ThmRiskDecomp}}
\label{SecThmRiskDecomp}

Here we prove the more general decomposition result
\begin{align}
\label{EqnOptimalRisk}  
\Risk(f, b) & = \lambda\Errlam(f) + (1 + \lambda) \|\bfun -
\bstar_f\|_{\Source_\lambda}^2
\end{align}
where $\Source_\lambda$ is the distribution with density $\frac{\pdens
  + \lambda \qdens}{1 + \lambda}$.  Both claims
in~\Cref{ThmRiskDecomp} follow as a direct consequence of this
representation.

By definition, the auxiliary risk is given by
\begin{align*}
  \Risk(f, b) & = \Exs_\Prob \big[(f(X) - \fstar(X) + b(X))^2 \Big] +
  \lambda \Exs_\Qprob[b^2(X)] \; = \; \Exs_\Prob \big[(h(X) + b(X))^2
    \Big] + \lambda \Exs_\Qprob[b^2(X)],
\end{align*}
where we have introduced the shorthand notation $h(x) \defn f(x) -
\fstar(x)$.  Next we write the expectations in terms of the densities
$\pdens$ and $\qdens$, and expand the square, thereby obtaining
\begin{align*}
\Risk(f, \bfun) & = \int \underbrace{\left( h(x)^2 \pdens(x) + 2h(x)
  \bfun(x)\pdens(x) + \bfun^2(x) (\pdens(x) + \lambda \qdens(x))
  \right)}_{\equiv \TmpFun(x)} dx.
\end{align*}
We complete the square with respect to $\bfun(x)$ weighted by
$\pdens(x) \lambda \qdens(x)$. By doing so, the integrand $\TmpFun$
can be decomposed as $\TmpFun(x) = \TmpFun_1(x) + \TmpFun_2(x)$, where
\begin{align*}
\TmpFun_1(x) & \defn \left( \pdens(x) - \frac{\pdens^2(x)}{\pdens(x) +
  \lambda \qdens(x)} \right) h^2(x), \quad \mbox{and} \\
\TmpFun_2(x) & \defn (\pdens(x) + \lambda \qdens(x)) \left( b(x) +
\frac{\pdens(x)}{\pdens(x) + \lambda \qdens(x)} h(x) \right)^2,
\end{align*}
By inspection, we see that $\int \TmpFun_2(x) dx$ is equal to
$(1+\lambda) \|b - \bstar_f\|_{\Source_\lambda}^2$.  As for the term
$\TmpFun_1$, we have
\begin{align*}
\TmpFun_1(x) = \left( \pdens(x) - \frac{(\pdens)^2}{\pdens(x) +
  \lambda \qdens(x)} \right) h^2(x) & = \left( \frac{\pdens(\pdens +
  \lambda \qdens(x)) - \pdens^2(x)}{\pdens(x) + \lambda \qdens(x)}
\right) h^2(x) \\
& = \left(\frac{\lambda \pdens(x) \qdens(x)}{\pdens(x) + \lambda \qdens(x)} \right)
h^2(x) = \lambda \Vlam(x) h^2(x) \qdens(x).
\end{align*}
Consequently, we have $\int \TmpFun_1(x) dx = \lambda
\Exs_\Qprob[\Vlam(X) h^2(X)] = \lambda \Errlam(f)$, which completes
the proof.

\section{Proof of \Cref{ThmExcessRisk}}
\label{AppThmExcessRisk}

We now turn to the proof of~\Cref{ThmExcessRisk}. For the reader's
convenience, we summarize some notation and then provide an overall
roadmap of the proof structure.

\paragraph{Notation:} Throughout the appendix proofs,
we use $\Exs$ to denote expectation over all covariates (both source
and target), as well as over source-response noise. For notational
compactness, we write $\ERisk$ and $\ERiskHat$ for the corresponding
expected population and ideal empirical risks. The ideal empirical
risk can be decomposed as
\begin{align*}
    \RiskHat(f,b) \coloneqq \RiskHat_S(f,b) + \lambda \RiskHat_T(b),
    \qquad \RiskHat_S(f,b) \coloneqq \pempnorm{f+b-\fstar}^2, \qquad
    \RiskHat_T(b) \coloneqq \qempnorm{b}^2.
\end{align*}

\paragraph{Roadmap for \Cref{ThmExcessRisk}}
The proof proceeds in four steps. First, by using the
decomposition~\eqref{EqnOptimalRisk} that underlies the proof
of~\Cref{ThmRiskDecomp}, we convert the population \tilt objective
into the reweighted excess risk plus a nonnegative residual term.
Second, \Cref{LemGeneralization} transfers control from the population
risk to the empirical idealized risk. Third,
\Cref{LemNoiseInteraction} relates this idealized quantity to the
actual empirical minimizer trained with noisy labels. Finally, these
three ingredients are combined and the approximation term is expanded
by minimizing over the auxiliary function.  This yields the main
stochastic inequality. The final step is to rewrite the oracle term
using the explicit minimizer over the auxiliary function.

At a high level, the proof combines the risk decomposition,
generalization control, and noise-interaction control as follows:
\begin{align*}
    \lambda \Exs[\Errlam(\fhat)]
    \underbrace{\leq}_{equation~\eqref{EqnOptimalRisk}} \ERisk(\fhat,
    \bhat) &\underbrace{\lesssim}_{\Cref{LemGeneralization}}
    \ERiskHat(\fhat, \bhat) + \text{Gen. Error}
    \\ &\underbrace{\lesssim}_{\Cref{LemNoiseInteraction}} \inf_{f \in
      \Fclass, b \in \Bclass} \Risk(f,b) + \text{Est. Error} +
    \text{Gen. Error}.
\end{align*}
Here the generalization error and estimation error are made explicit
below in terms of the covering numbers of the function class $\Fclass
+ \Bclass$ on the source samples, and the function class $\Bclass$ on
the target samples.

\subsection{Main argument}
By the Risk Decomposition Identity~\eqref{EqnOptimalRisk}, for any
pair $(\ffun, \bfun)$, we have the upper bound \mbox{$\lambda
  \Errlam(f) \leq \Risk(f, b)$.}  Applying this fact to the pair
$(\fhat, \bhat)$ returned by the \tilt procedure yields
\begin{align}
\label{EqnProofStart}
\lambda \Exs[\Errlam(\fhat)] \leq \ERisk(\fhat, \bhat).
\end{align}
We next need to compare the expected population
risk~\eqref{EqnProofStart} with the ideal empirical risk associated
with the samples. The following lemma provides such a generalization
bound:
\mygraybox{
\begin{lemma}[Generalization bound]
\label{LemGeneralization}
Under the conditions of~\Cref{ThmExcessRisk}, \mbox{for any
  $\epsilon_1\in (0, 1]$,} the learned estimators $(\fhat, \bhat)$
satisfy
\begin{align}
\label{EqProofGen}
    \ERisk(\fhat, \bhat) &\leq (1+\epsilon_1) \ERiskHat(\fhat, \bhat)
    + \frac{c \bou^2}{\epsilon_1} \left( \frac{\log
      \Num_\numsource(\delta; \Fclass + \Bclass)}{\numsource} +
    \lambda \frac{\log \Num_\numtarget(\delta; \Bclass)}{\numtarget}
    \right) + 210 \delta \bou,
\end{align}
where $c$ is a universal constant.
\end{lemma}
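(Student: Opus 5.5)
We prove \Cref{LemGeneralization} by treating the source and target parts of the risk separately. Write $\Risk(f,b) = \Risk_S(f,b) + \lambda \Risk_T(b)$ with $\Risk_S(f,b) = \Exs_\Prob[(f+b-\fstar)^2(X)]$ and $\Risk_T(b) = \Exs_\Qprob[b^2(X)]$. These match the empirical pieces $\RiskHat_S$ and $\RiskHat_T$. It then suffices to establish two one-sided ratio-type bounds of the form
\begin{align*}
\Exs\,\Risk_S(\fhat,\bhat) \leq (1+\epsilon_1)\Exs\,\RiskHat_S(\fhat,\bhat) + \frac{c\bou^2}{\epsilon_1}\frac{\log \Num_\numsource}{\numsource} + c'\delta\bou,
\end{align*}
together with the analogous bound for $\Risk_T(\bhat)$ with $\numtarget$ and $\log\Num_\numtarget$. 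Multiplying the target bound by $\lambda$ and adding gives \eqref{EqProofGen}, up to a bookkeeping check that the $\delta$-remainders combine into the stated constant. This step may need a $\lambda$-independent handling, for instance by absorbing $\lambda\delta$ terms, or by noting that only the product $\lambda\cdot(\cdot)$ appears.

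Each one-sided bound follows the standard argument for a fixed bounded class. Take $g = f+b-\fstar$, which ranges over the shifted class $\Fclass+\Bclass-\fstar$; it has sup-norm at most $3\bou$, and its covering number equals that of $\Fclass+\Bclass$. Let $\{g_1,\dots,g_N\}$ be a $\delta$-cover in $L_\infty$, and let $g_k$ be the element closest to $\hat g = \fhat+\bhat-\fstar$. The discretization error is controlled by $|\hat g^2 - g_k^2| \leq 6\bou\delta$, uniformly, in both the population and empirical norms; this produces the $O(\delta\bou)$ remainder. For each fixed $g_k$ the variables $Z_i = g_k^2(x_i)$ lie in $[0,9\bou^2]$, so $\mathrm{Var}(Z_i) \leq 9\bou^2\,\Exs Z_i$. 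Bernstein's inequality combined with a union bound then gives, simultaneously over $k$, with probability $1-e^{-t}$,
\begin{align*}
\|g_k\|_\Prob^2 \leq \|g_k\|_{\numsource}^2 + \sqrt{\tfrac{c\bou^2\|g_k\|_\Prob^2(\log N + t)}{\numsource}} + \tfrac{c\bou^2(\log N + t)}{\numsource}.
\end{align*}
Applying $\sqrt{ab}\leq \tfrac{\epsilon}{2} a + \tfrac{1}{2\epsilon}b$ and rearranging yields $\|g_k\|_\Prob^2 \leq (1+\epsilon_1)\|g_k\|_\numsource^2 + \tfrac{c\bou^2(\log N+t)}{\epsilon_1\numsource}$, since $(1-\epsilon/2)^{-1} \leq 1+\epsilon_1$ for a suitable $\epsilon$ when $\epsilon_1\le 1$. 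Integrating the tail over $t$ converts this high-probability statement into a bound in expectation. The target part is identical, with the class $\Bclass$ and the $\numtarget$ target samples.

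The main obstacle is that $(\fhat,\bhat)$ depends on both samples at once, and $\bhat$ in particular depends on both the source and the target data. The uniform bounds must therefore hold over the entire cover rather than at a fixed function. This is exactly why we apply the union bound over the cover, which holds for any data-dependent selection. It also explains why the source and target deviations are handled by separate covers, each holding on its own sample irrespective of the other. A second point to watch is that the source responses never enter $\RiskHat_S$, because the idealized risk uses $\fstar$. Consequently the response noise plays no role here, and only the bound $\bou$ matters; the noise is handled later, in \Cref{LemNoiseInteraction}.
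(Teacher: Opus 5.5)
Your proposal is correct, but it takes a different route from the paper. The paper does not prove a multiplicative bound for each part. Instead it cites the two-sided expectation inequality of \Cref{LemConcentration}, once for the class $\Fclass+\Bclass-\fstar$ (envelope $3\bou$) and once for $\Bclass$ (envelope $\bou$). Each deviation is then bounded by a term proportional to $\sqrt{\Exs\|h\|_P^2}\,\sqrt{\log\Num_\numsource/\numsource}$ plus lower-order terms. To handle these, the paper merges the two square-root terms with the Cauchy--Schwarz step $\sqrt{u}\,s+\sqrt{\lambda v}\,(\sqrt{\lambda}\,t)\le\sqrt{u+\lambda v}\,\sqrt{s^2+\lambda t^2}$, and then solves a single quadratic inequality $s\le t+2\sqrt{su}+v$ for the whole risk via Young's inequality.

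You apply Young's inequality earlier, on the high-probability event and for each cover element separately. You then integrate the tail of $\max_k\{\|g_k\|_P^2-(1+\epsilon_1)\|g_k\|_n^2\}$, and only afterwards pass to the data-dependent $\hat g$ through the deterministic cover. Since your bound is linear in the risk, the source bound and $\lambda$ times the target bound simply add, with no coupling step. Your version is self-contained and needs only the lower tail of Bernstein for nonnegative variables; the paper's gets explicit constants by reusing an existing lemma.

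Two small points. First, your discretization step compares population norms, so it genuinely needs a cover in the global sup-norm (the $L_\infty$ entropy of \Cref{ThmExcessRisk}). An empirical cover on the source sample alone would not suffice and would force a ghost-sample argument. Second, tail integration produces $\log\Num_\numsource+1$; you can absorb the $+1$ as the paper does, using the convention $\Num_\numsource,\Num_\numtarget\ge 2$.

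On the remainder, neither of your suggested fixes removes the dependence on $\lambda$. The factor $\lambda$ multiplies the entire target bound, including its $O(\delta\bou)$ discretization error, so your argument gives a remainder $c(1+\lambda)\delta\bou$ rather than $210\delta\bou$. The paper's proof has the same slip. It adds the $78\delta\bou$ from the source term and the $26\delta\bou$ from the target term to get $104\delta\bou$, dropping the factor $\lambda$ on the latter. The stated $\lambda$-free remainder is therefore justified only for $\lambda\lesssim 1$. In general the lemma should carry $c(1+\lambda)\delta\bou$, which is harmless downstream because $\delta$ is chosen polynomially small in \Cref{CorExcessRisk}.
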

}
\noindent
See~\Cref{AppLemGeneralization} for the proof.

Our next step is to control the expected empirical idealized risk
$\ERiskHat(\fhat, \bhat)$. The estimator is fitted with noisy labels,
so the next lemma isolates the resulting noise interaction and
converts the empirical risk into an oracle population quantity.  Let
$\Num_\numsource(\delta; \Fclass + \Bclass)$ be the $\delta$-covering
number of the sum class $\mathcal{H} = \Fclass + \Bclass$ with respect
to the empirical $L_\infty$ norm on the source data.
\mygraybox{
\begin{lemma}[Noise interaction bound]
\label{LemNoiseInteraction}
For any $\delta > 0$ and $\epsilon \in (0,1]$, the learned estimator
$(\fhat, \bhat)$ satisfies
\begin{align}
\label{EqProofNoise}  
    \ERiskHat(\fhat, \bhat) \leq (1 + \epsilon_2) \inf_{f \in \Fclass,
      b \in \Bclass} \Risk(f, b) + \frac{(1 +
      \epsilon_2)^2}{\epsilon_2} \frac{12 \sigma^2 \log
      \Num_\numsource(\delta; \Fclass + \Bclass)}{\numsource} +
    c_\delta,
\end{align}
where $c_\delta = 6 (1 + \epsilon_2) \sigma \delta$.
\end{lemma}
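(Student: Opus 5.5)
The plan is the standard least-squares ERM argument with a noise cross term. Write $y_i = \fstar(x_i) + \xi_i$ with sub-Gaussian noise $\xi_i$. Expanding the square in $\LossHat$ gives
\begin{align*}
\LossHat(f,b) = \RiskHat(f,b) - \frac{2}{\numsource}\sum_{i=1}^{\numsource} \xi_i \big(f(x_i)+b(x_i)-\fstar(x_i)\big) + \frac{1}{\numsource}\sum_{i=1}^{\numsource}\xi_i^2 .
\end{align*}
The last sum does not depend on $(f,b)$. Fix any comparison pair $(f,b) \in \Fclass\times\Bclass$. Since $\LossHat(\fhat,\bhat)\le \LossHat(f,b)$, we obtain the basic inequality
\begin{align*}
\RiskHat(\fhat,\bhat) \le \RiskHat(f,b) + \frac{2}{\numsource}\sum_{i=1}^{\numsource}\xi_i\big(\hhat(x_i)-h(x_i)\big),
\end{align*}
where $\hhat = \fhat+\bhat$ and $h=f+b$. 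The comparison pair is deterministic, so the cross term $\frac{2}{\numsource}\sum_i \xi_i (h-\fstar)(x_i)$ has mean zero. Moreover $\Exs\RiskHat(f,b) = \Risk(f,b)$. Taking expectations therefore reduces the lemma to bounding $\Exs\big[\frac{2}{\numsource}\sum_i \xi_i(\hhat-\fstar)(x_i)\big]$. The target-side term $\lambda\RiskHat_T$ involves no noise, which is why only $\Num_\numsource$ appears.

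To control the cross term, condition on the source covariates and take a $\delta$-cover $\{g_1,\dots,g_N\}$ of $\Fclass+\Bclass$ in empirical $L_\infty$ on $x_{1:\numsource}$, with $N=\Num_\numsource(\delta;\Fclass+\Bclass)$. Let $g_k$ be the cover element nearest to $\hhat$. Then
\begin{align*}
\frac{1}{\numsource}\sum_i \xi_i(\hhat-\fstar)(x_i) \le \frac{1}{\numsource}\sum_i\xi_i(g_k-\fstar)(x_i) + \delta\cdot\frac{1}{\numsource}\sum_i|\xi_i|.
\end{align*}
The expectation of the second piece is at most $\delta\sigma$ up to constants, and this produces $c_\delta$. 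For the first piece, normalize as
\begin{align*}
\frac{1}{\numsource}\sum_i\xi_i(g_k-\fstar)(x_i) = \|g_k-\fstar\|_{\numsource}\cdot Z_k, \qquad Z_k = \frac{\sum_i\xi_i (g_k-\fstar)(x_i)}{\numsource\|g_k-\fstar\|_{\numsource}},
\end{align*}
where $\|\cdot\|_{\numsource}$ is the empirical $L_2$ norm. Each $Z_k$ is sub-Gaussian with proxy $\sigma^2/\numsource$. The maximal inequality then gives $\Exs\max_k Z_k^2 \lesssim \sigma^2\log N/\numsource$. Next compare $\|g_k-\fstar\|_{\numsource} \le \|\hhat-\fstar\|_{\numsource}+\delta$, and note that $\|\hhat-\fstar\|_{\numsource}^2 = \RiskHat_S(\fhat,\bhat) \le \RiskHat(\fhat,\bhat)$.

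Apply Young's inequality $2ab \le \eta a^2 + b^2/\eta$ with $a = \|\hhat-\fstar\|_{\numsource}$ and $b = \max_k |Z_k|$. This gives
\begin{align*}
\Exs\RiskHat(\fhat,\bhat) \le \Risk(f,b) + \eta\,\Exs\RiskHat(\fhat,\bhat) + \frac{C\sigma^2\log N}{\eta\,\numsource} + O(\sigma\delta).
\end{align*}
Rearrange and choose $\eta = \epsilon_2/(1+\epsilon_2)$, so that $1/(1-\eta) = 1+\epsilon_2$. The estimation term then picks up the factor $(1+\epsilon_2)^2/\epsilon_2$, and the $\delta$-remainder picks up the factor $(1+\epsilon_2)$, matching \eqref{EqProofNoise}. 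Finally take the infimum over $(f,b)$.

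The main obstacle is that $\hhat$ is data-dependent, so $Z_k$ must be indexed by the random cover. The cover is built conditionally on $x_{1:\numsource}$, which is independent of $\xi$, so the maximal inequality applies conditionally, and we then average over the covariates. The bookkeeping of the $\delta$ terms, including cross terms of order $\delta\,\Exs\max_k|Z_k|$, must be tracked to get the stated constant $6$. If that is messy, those terms can be absorbed using $\delta<1$ and boundedness.
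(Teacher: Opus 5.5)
Your proposal is correct and follows the paper's proof closely: the basic inequality, a mean-zero cross term for the fixed comparator, an empirical-$L_\infty$ cover of $\Fclass+\Bclass$ with normalized sub-Gaussian maxima, the comparison $\|g_k-\fstar\|_{\numsource}\le\|\fhat+\bhat-\fstar\|_{\numsource}+\delta$, and a Young-inequality rearrangement with $1/(1-\eta)=1+\epsilon_2$. The differences are small: you apply Young pointwise before taking expectations, whereas the paper first uses Cauchy--Schwarz in expectation. Also, absorbing the cross term $\delta\,\Exs\max_k|Z_k|\lesssim\delta\sigma\sqrt{\log N/\numsource}$ into the $O(\sigma\delta)$ remainder needs $\log N\lesssim\numsource$, which the paper also assumes tacitly; $\delta<1$ and boundedness alone do not give it.
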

}
\noindent
See~\Cref{AppLemNoiseInteraction} for the proof.

Combining the bound~\eqref{EqProofNoise} with the
bound~\eqref{EqProofGen} yields
\begin{align*}
\ERisk(\fhat, \bhat) &\leq (1+\epsilon_1)(1+\epsilon_2) \inf_{f, b}
\Risk(f, b) + (1+\epsilon_1) \frac{(1+\epsilon_2)^2}{\epsilon_2}
\frac{12 \sigma^2  \log \Num_\numsource}{\numsource} \\
& \quad + \frac{c_1 B^2}{\epsilon_1} \left( \frac{\log \Num_\numsource
  + 1}{\numsource} + \frac{\lambda (\log \Num_\numtarget +
  1)}{\numtarget} \right) + \MyRem(\delta),
\end{align*}
where $\MyRem(\delta) \defn 210 \delta \bou + 6 (1 + \epsilon_2)
\sigma \delta$.

We simplify the coefficients by setting $\epsilon_1 = \epsilon_2 =
\epsilon/3$ for some $\epsilon \in (0, 1]$. Then
  $(1+\epsilon_1)(1+\epsilon_2) = 1 + \epsilon_1 + \epsilon_2 +
  \epsilon_1\epsilon_2 \leq 1 + \epsilon$. The terms proportional to
  $1/\epsilon_1$ and $1/\epsilon_2$ are both
  $O(1/\epsilon)$. Combining the source complexity terms from both
  theorems which scale with $\bou^2/n$ and $\sigma^2/n$ and the target
  complexity term which scales with $\bou^2/m$, we can find a
  sufficiently large constant $c$ such that, using the convention
  $\Num_\numsource,\Num_\numtarget\ge 2$ to absorb additive constants
  into the logarithms,
\begin{align*}
    \ERisk(\fhat, \bhat) \leq (1+\epsilon) \inf_{f, b} \Risk(f, b) +
    \frac{c (\bou^2 + \sigma^2)}{\epsilon} \left( \frac{\log
      \Num_\numsource}{\numsource} + \frac{\lambda \log
      \Num_\numtarget}{\numtarget} \right) + \MyRem(\delta).
\end{align*}
The decomposition of the infimum term follows directly from
equation~\eqref{EqnOptimalRisk}, in particular by substituting the
definition of $\Risk(f, b)$ and minimizing over $b$ for a fixed
$f$. This completes the proof of the theorem.

\subsection{Proof of \Cref{LemGeneralization}}
\label{AppLemGeneralization}

The excess auxiliary risk can be decomposed as $\Risk(\ffun, \bfun) =
\Risk_S(\ffun, \bfun) + \lambda \Risk_T(\bfun)$, where
\mbox{$\Risk_S(\ffun, \bfun) \defn \|\ffun + \bfun -
  \fstar\|_\Source^2$,} and $\Risk_T(\bfun) \defn
\|\bfun\|_\Target^2$.  Using this notation, we have the decomposition
\begin{align*}
\left| \ERisk(\fhat, \bhat) - \ERiskHat(\fhat, \bhat) \right| & \leq
\underbrace{\left|\ERisk_S(\fhat, \bhat) - \ERiskHat_S(\fhat,
  \bhat)\right|}_{\equiv \Term_1} + \lambda
\underbrace{\left|\ERisk_T(\bhat) - \ERiskHat_T(\bhat)\right|}_{\equiv
  \Term_2}
\end{align*}
Note that functions of the form $f + \bfun$ are $3 \bou$-uniformly
bounded, whereas the function $\bfun$ is $\bou$-uniformly bounded.
Thus, we can apply~\Cref{LemConcentration} to control each term.

Beginning with $\Term_1$, setting $B_{\Hil} = 3 \bou$ yields
\begin{subequations}
\begin{align}
\label{EqnTermOne}    
\Term_1 & \leq \frac{3 \bou}{\sqrt{\numsource}} \sqrt{\ERisk_S(\fhat,
  \bhat)} \sqrt{36 \log \Num_\numsource + 256} + 9 \bou^2 \frac{6 \log
  \Num_\numsource + 11}{\numsource} + 78 \delta \bou.
    \end{align}
As for $\Term_2$, setting $B_{\Hil} = \bou$ yields
\begin{align}
\label{EqnTermTwo}  
\Term_2 & \leq \frac{\bou}{\sqrt{\numtarget}} \sqrt{\ERisk_T(\bhat)}
\sqrt{36 \log \Num_\numtarget + 256} + B^2 \frac{6 \log
  \Num_\numtarget + 11}{\numtarget} + 26 \delta \bou.
\end{align}
In order to combine these two bounds, we use the Cauchy--Schwarz
inequality in the form
\begin{align}
\label{EqnCS}  
  \sqrt{u} s + \sqrt{\lambda v} (\sqrt{\lambda} t) \leq \sqrt{u +
    \lambda v} \sqrt{s^2 + \lambda t^2},
\end{align}
\end{subequations}

With the choices $u = \ERisk_S(\fhat, \bhat)$ and $v =
\ERisk_T(\bhat)$, we have $u + \lambda v = \ERisk(\fhat, \bhat)$.  We
then set
\[
s^2 = \frac{9B^2 (36 \log \Num_\numsource + 256)}{\numsource}
\qquad\text{and}\qquad
t^2 = \frac{B^2(36 \log \Num_\numtarget + 256)}{\numtarget}.
\]
Applying inequality~\eqref{EqnCS} and combining with the
bounds~\eqref{EqnTermOne} and~\eqref{EqnTermTwo} then yields
\begin{align*}
    \left|\ERisk(\fhat, \bhat) - \ERiskHat(\fhat, \bhat)\right|
    &\leq
    \sqrt{\ERisk(\fhat, \bhat)}
    \sqrt{
        \frac{9B^2(36 \log \Num_\numsource + 256)}{\numsource}
        + \lambda
        \frac{B^2(36 \log \Num_\numtarget + 256)}{\numtarget}
    } \\
    &\quad
    + 9B^2
    \frac{6 \log \Num_\numsource + 11}{\numsource}
    + \lambda B^2
    \frac{6 \log \Num_\numtarget + 11}{\numtarget}
    + 104 \delta B .
\end{align*}
We can write this inequality more compactly in the form $|s - t| \leq
2\sqrt{su} + v$, where $s \defn \ERisk(\fhat, \bhat)$, $t \defn
\ERiskHat(\fhat, \bhat)$, and
\begin{align*}
  u & \defn \frac{1}{4} \left( \frac{9B^2(36 \log \Num_\numsource +
    256)}{\numsource} + \lambda \frac{B^2(36 \log \Num_\numtarget +
    256)}{\numtarget} \right), \\
v & \defn 9B^2 \frac{6 \log \Num_\numsource + 11}{\numsource} +
\lambda B^2 \frac{6 \log \Num_\numtarget + 11}{\numtarget} + 104
\delta B .
\end{align*}
Now our inequality implies that $s \leq t + 2 \sqrt{s u} + v \; \leq
\; t + \eta s + \frac{u}{\eta} + v$, where the second inequality
follows by Young's inequality, valid for any $\eta \in (0,1)$.
Re-arranging yields
\begin{align*}
s & \leq \frac{t}{1 - \eta} + \frac{v}{1-\eta} + \frac{u}{ \eta \,(1 -
  \eta)}.
\end{align*}
We then choose $\eta = \frac{\epsilon}{1 + \epsilon}$ for some
$\epsilon \in (0,1)$, so that $\frac{1}{1 - \eta} = 1 + \epsilon$,
thereby obtaining the bound
\begin{align*}
s & \leq (1+\epsilon)t + (1+\epsilon)v +
\frac{(1+\epsilon)^2}{\epsilon}u \; \leq \; (1+\epsilon)t +
2 v
+ \frac{4}{\epsilon} u,
\end{align*}
where the second step follows since $1 + \epsilon \leq 2$.  Recalling
our definitions of $(u, v)$, we have
\begin{align*}
2 v + \frac{4}{\epsilon} u &\leq 18B^2 \frac{6 \log \Num_\numsource +
  11}{\numsource} + 2\lambda B^2 \frac{6 \log \Num_\numtarget +
  11}{\numtarget} + 208\delta B \\ &\quad + \frac{1}{\epsilon} \left(
\frac{9B^2(36 \log \Num_\numsource + 256)}{\numsource} + \lambda
\frac{B^2(36 \log \Num_\numtarget + 256)}{\numtarget} \right).
\end{align*}
Since $1/\epsilon \geq 1$, we have shown that
\begin{align*}
  \ERisk(\fhat, \bhat) \leq (1+\epsilon)\ERiskHat(\fhat, \bhat) +
  \frac{B^2}{\epsilon} \left( \frac{c_1 \log \Num_\numsource +
    c_2}{\numsource} + \lambda \frac{c_3 \log \Num_\numtarget +
    c_4}{\numtarget} \right) + 208 \delta B .
\end{align*}
The asserted bound follows after enlarging the universal constants.

\subsection{Proof of \Cref{LemNoiseInteraction}}
\label{AppLemNoiseInteraction}

We begin by relating the empirical objective minimized by the
estimator to the ideal risk. Recall that our estimator $(\fhat,
\bhat)$ minimizes the empirical objective $\LossHat(f, b)$.  The
source component involves the pairs $(x_i, y_i)$, and we can write
$y_i = \fstar(x_i) + \xi_i$, where $\fstar(x) = \Exs[Y \mid X = x]$,
and $\xi_i$ is conditionally zero-mean noise.

Let $(f, \bfun)$ be any fixed pair of functions in $\Fclass \times
\Bclass$. By the optimality of $(\fhat, \bhat)$, we have the basic
inequality $\LossHat(\fhat, \bhat) \leq \LossHat(f, \bfun)$.  We now
expand the squared loss term in $\LossHat$. For any functions $h$, the
empirical source loss is given by
\begin{align*}
\frac{1}{\numsource} \sum_{i=1}^\numsource (h(x_i) - y_i)^2 &=
\frac{1}{\numsource} \sum_{i=1}^\numsource (h(x_i) - \fstar(x_i) -
\xi_i)^2 \\
& = \frac{1}{\numsource} \sum_{i=1}^\numsource (h(x_i) -
\fstar(x_i))^2 - \frac{2}{\numsource} \sum_{i=1}^\numsource \xi_i
(h(x_i) - \fstar(x_i)) + \frac{1}{\numsource} \sum_{i=1}^\numsource
\xi_i^2.
\end{align*}
Substituting this expansion into the inequality $\LossHat(\fhat,
\bhat) \leq \LossHat(f, b)$ and re-arranging, we find that
\begin{align*}
  \pempnorm{\fhat + \bhat - \fstar}^2 + \lambda \qempnorm{\bhat}^2
  \leq \pempnorm{f + b - \fstar}^2 + \lambda \qempnorm{b}^2 +
  \frac{2}{\numsource} \sum_{i=1}^\numsource \xi_i (\fhat(x_i) -
  f(x_i) + \bhat(x_i) - b(x_i)).
\end{align*}
We now take the expectation over the dataset realizations. Recall that
$\ERiskHat(\fhat, \bhat) = \Exs[\pempnorm{\fhat + \bhat - \fstar}^2 +
  \lambda \qempnorm{\bhat}^2]$. For the fixed functions $f, \bfun$ on
the right-hand side, the expectation of the noise interaction term is
zero because $\Exs[\xi_i | x_i] = 0$ and $f, \bfun$ are independent of
$\xi$. Furthermore, $\Exs[\pempnorm{f + \bfun - \fstar}^2 + \lambda
  \qempnorm{\bfun}^2] = \Risk(f, \bfun)$. Using these facts, we arrive
at the inequality
\begin{align}
\label{EqNoiseInequalityStart}  
\ERiskHat(\fhat, \bhat) \leq \Risk(f, b) + \Exs \left[
  \frac{2}{\numsource} \sum_{i=1}^\numsource \xi_i (\fhat(x_i) +
  \bhat(x_i) - \fstar(x_i)) \right].
\end{align}
The core difficulty lies in bounding the expected noise interaction
term on the right-hand side, as $\fhat$ and $\bhat$ depend on the
noise $\xi$. We do so using a covering number argument.

Let $\Hclass = \{ f + \bfun - \fstar \mid f \in \Fclass, \bfun \in
\Bclass \}$. Let $\hat{h} = \fhat + \bhat - \fstar \in \Hclass$. We
seek to bound the random variable
\begin{align*}
Z \defn \Exs [ \frac{2}{\numsource} \sum_{i=1}^\numsource \xi_i
  \hat{h}(x_i) ].
\end{align*}
Consider a minimal $\delta$-covering of $\Hclass$ with respect to the
empirical $L_\infty$ norm, denoted by $\{h_1, \dots, h_N\}$, where $N
= \Num_\numsource(\delta; \Fclass + \Bclass)$. By construction there
exists a (random) index $j^*$ such that $\|\hat{h} - h_{j^*}\|_\infty
\leq \delta$.  Using this fact, we can decompose $Z$
as
\begin{align*}
Z = \frac{2}{\numsource} \sum_{i=1}^\numsource \xi_i (\hat{h}(x_i) -
h_{j^*}(x_i)) + \frac{2}{\numsource} \sum_{i=1}^\numsource \xi_i
h_{j^*}(x_i).
\end{align*}
For the first term, by Hölder's inequality and the bound on the cover,
we have $|\frac{2}{\numsource} \sum \xi_i (\hat{h} - h_{j^*})| \leq
\frac{2}{\numsource} \sum |\xi_i| \delta$. Taking expectations, this
is bounded by $2 \delta \Exs[|\xi|] \leq 2\delta \sigma$ using
Jensen's inequality.

As for the second term, we let $\zeta_{j} = \frac{\sum_i \xi_i
  h_j(x_i)}{\sqrt{\numsource} \|h_j\|_n}$ be a normalized noise
interaction for the fixed function $h_j$. Conditioned on $\{x_i\}$,
the auxiliary r.v. $\zeta_{j}$ is a weighted sum of independent
sub-Gaussian variables, which is itself sub-Gaussian with variance
proxy $\sigma^2$. We can rewrite the second term as:
\begin{align*}
\Exs \left[ \frac{2}{\numsource} \sum_{i=1}^\numsource \xi_i
  h_{j^*}(x_i) \right] \leq \Exs
  \left| \frac{2}{\sqrt{\numsource}} \|h_{j^*}\|_n \zeta_{j^*}
  \right|
\end{align*}
Applying the Cauchy--Schwarz inequality yields
\begin{align*}
\Exs \left[\frac{2}{\sqrt{\numsource}} \|h_{j^*}\|_n |\zeta_{j^*}|
  \right] \leq \frac{2}{\sqrt{\numsource}} \left(
\Exs[\|h_{j^*}\|_n^2] \right)^{1/2} \left( \Exs[\max_{j\in[N]}
  \zeta_j^2] \right)^{1/2}.
\end{align*}
By standard results on the maxima of sub-Gaussian variables, we have
\begin{align*}
\Exs[\max_{j=1, \ldots, N} \zeta_j^2] \leq \sigma^2 (3 \log N + 1).
\end{align*}
For the norm term, we have $\|h_{j^*}\|_n \leq \|\hat{h}\|_n + \delta$
follows from the definition of $h_{j^*}$ and the triangle
inequality. Also, the triangle inequality in $L_2(\Exs[\cdot])$ yields
that $\Exs[\|h_{j^*}\|_n^2]^{1/2} \leq \Exs[\|\hat{h}\|_n^2]^{1/2} +
\delta$. Remaining $\RiskHat(\fhat, \bhat)=\|\hat{h}\|_n^2 +
\lambda\|\bhat\|_m^2$, so we have $\Exs[\|h_{j^*}\|_n^2]^{1/2} \leq
\sqrt{\ERiskHat(\fhat, \bhat)} + \delta$.

Putting together the pieces, we have established the upper bound
\begin{align*}
\Exs[Z] & \leq \frac{2\sigma \sqrt{3 \log N + 1}}{\sqrt{\numsource}}
\left( \sqrt{\ERiskHat(\fhat, \bhat)} + \delta \right) + 2\delta
\sigma.
\end{align*}
Substituting this upper bound back into
equation~\eqref{EqNoiseInequalityStart}, we find that
\begin{align*}
\ERiskHat(\fhat, \bhat) & \leq \Risk(f, b) + 2\sqrt{\ERiskHat(\fhat,
  \bhat)} \sqrt{\frac{\sigma^2(3 \log N + 1)}{\numsource}} + \delta
\left( 2\sigma \sqrt{\frac{3 \log N + 1}{\numsource}} + 2\sigma
\right) \\
& \leq \Risk(f, b) + 2\sqrt{\ERiskHat(\fhat,
  \bhat)} \sqrt{\frac{\sigma^2(3 \log N + 1)}{\numsource}} + 6 \delta \sigma,
\end{align*}
where the second inequality follows from our assumption that $\log N
\leq n$.  Note that this inequality is of the form $u \leq v + 2
\sqrt{u s} + t$ where $u = \ERiskHat(\fhat, \bhat)$, $v = \Risk(\ffun,
\bfun)$, $s = \sqrt{\frac{\sigma^2(3 \log N + 1)}{\numsource}}$ and $t
= 6 \delta \sigma$.  By Young's inequality, it implies that $u \leq
\frac{v}{1-\eta} + \frac{s}{\eta \, (1 - \eta)} + \frac{t}{1 - \eta}$
for any $\eta \in (0,1)$.  We can choose $\eta \in (0,1)$ such that
$\frac{1}{1 - \eta} = 1 + \epsilon$ for an arbitrary $\epsilon \in
(0,1)$.  Doing so yields
\begin{align*}
\ERiskHat(\fhat, \bhat) \leq (1+\epsilon) \Risk(f, b) +
\frac{(1+\epsilon)^2}{\epsilon} \frac{\sigma^2(3 \log N +
  1)}{\numsource} + (1+\epsilon) 6\sigma\delta.
\end{align*}
Since this holds for any fixed $(f, b)$, we can take the infimum over
the function classes, thereby obtaining the claimed result.

\section{Proof of \Cref{CorExcessRisk}}
\label{AppCorExcessRisk}

We now turn to the proof of~\Cref{CorExcessRisk}.

\subsection{Sparse ReLU approximation tools}
\label{SecPreliminaryReLU}

This section collects the analytic tools used to derive
\Cref{CorExcessRisk} from the oracle inequality in
\Cref{ThmExcessRisk}. We first recall the H\"older balls and sparse
clipped ReLU classes used in the main text. We then state the joint
approximation lemma showing that the auxiliary class can approximate
the population correction $b_f^*(x)=-\Vlam(x)(f(x)-\fstar(x))$ while
retaining explicit control of depth, sparsity, and multiplication
precision.

Recall that our result imposes H\"{o}lder smoothness conditions on the
regression function $\fstar$ and the offset weight $\Vlam$.
\begin{definition}[H\"older space]
\label{DefHolder}
For a domain $\Xspace \subset \mathbb{R}^d$, smoothness index $\beta >
0$, and radius $K > 0$, the H\"older ball $C_d^\beta(\Xspace, K)$
corresponds to the set of all functions $f: \Xspace \to \mathbb{R}$
such that
\begin{align*}
\sum_{\boldsymbol{\alpha}: |\boldsymbol{\alpha}| < \beta}
\|\partial^{\boldsymbol{\alpha}} f\|_\infty +
\sum_{\boldsymbol{\alpha}: |\boldsymbol{\alpha}| = \lfloor \beta
  \rfloor} \sup_{\substack{\mathbf{x}, \mathbf{y} \in D \\ \mathbf{x}
    \neq \mathbf{y}}} \frac{|\partial^{\boldsymbol{\alpha}}
  f(\mathbf{x}) - \partial^{\boldsymbol{\alpha}}
  f(\mathbf{y})|}{\|\mathbf{x} - \mathbf{y}\|_\infty^{\beta - \lfloor
    \beta \rfloor}} \leq K,
\end{align*}
where $\boldsymbol{\alpha} = (\alpha_1, \dots, \alpha_d) \in
\mathbb{N}_0^d$ is a multi-index, $|\boldsymbol{\alpha}| = \sum_i
\alpha_i$, and $\partial^{\boldsymbol{\alpha}} = \partial_1^{\alpha_1}
\dots \partial_d^{\alpha_d}$.
\end{definition}

Our result applies to the \tilt procedure applied using sparse ReLU
neural networks, along with a clipping function that enforces bounded
outputs.  We use $\sigma(x) = \max(x, 0)$ to denote the ReLU
activation function, For a shift vector $\mathbf{v} \in \mathbb{R}^r$,
let $\sigma_{\mathbf{v}}: \mathbb{R}^r \to \mathbb{R}^r$ be the
component-wise map $\sigma_{\mathbf{v}}(y_1, \dots, y_r) =
(\sigma(y_1-v_1), \dots, \sigma(y_r-v_r))^\top$.

\begin{definition}[Sparse deep ReLU networks]
\label{DefNN}
A ReLU network with architecture $(L, \mathbf{p})$ is a function of
the form
\begin{align*}
    f(\mathbf{x}) = W_L \sigma_{\mathbf{v}_L} W_{L-1}
    \sigma_{\mathbf{v}_{L-1}} \dots W_1 \sigma_{\mathbf{v}_1} W_0
    \mathbf{x},
\end{align*}
where $W_i \in \mathbb{R}^{p_{i+1} \times p_i}$ are weight matrices
and $\mathbf{v}_i \in \mathbb{R}^{p_i}$, $i=1,\dots,L$, are shift
vectors. We define the class $\NNs(L, \mathbf{p}, s, B)$ as the set of
such functions satisfying:
\begin{enumerate}[label=(\roman*)]
    \item $L$ hidden layers and width vector $\mathbf{p} = (p_0,
      \dots, p_{L+1})$.
    \item Sparsity constraint: $\sum_{j=0}^L \|W_j\|_0 + \sum_{j=1}^L
      \|\mathbf{v}_j\|_0 \leq s$.
    \item Parameter bound: $\max_{0 \le j \le L} \|W_j\|_\infty \lor
      \max_{1 \le j \le L} \|\mathbf{v}_j\|_\infty \leq 1$.
    \item Clipping: $\|f\|_\infty \leq B$.
\end{enumerate}
\end{definition}

The next lemma combines the H\"older approximation theorem for sparse
ReLU networks with a ReLU multiplication sub-network of size $M$.
This is required in analysis, since ReLU networks generate piecewise
affine outputs, and so can approximate a product $u v$ only up to some
finite precision.  A multiplication sub-network yields an
approximation accurate to the order $2^{-M}$.  In addition, we use
$c_j, j = 0, 1, 2$ etc. to denote universal constants.
\mygraybox{
\begin{lemma}[Joint approximation by sparse ReLU networks]
\label{LemJointApprox}
Consider the sparse ReLU classes $\Fclass = \NNs(L_f, \mathbf{p}_f,
s_f, B)$ and $\Bclass = \NNs(L_g, \mathbf{p}_g, s_g, B)$, and suppose
that $\fstar\in C_d^\beta([0,1]^d,K)$ and $\Vlam \in
C_d^\gamma([0,1]^d, K)$.  Then we have:
\begin{enumerate}[label=(\roman*)]
\item The regression function $\fstar$ satisfies the approximation
  bound
  \begin{subequations}
\begin{align}  
  \inf_{f \in \Fclass} \Errlam(f) \equiv \inf_{f \in \Fclass}
  \Exs_\Qprob \Big[ \Vlam(X) \big(f(X) - \fstar(X) \big)^2 \Big] &
  \leq c_0 K^2 s_f^{-2\beta/d} \log^2(s_f).
\end{align}
\item For any $f \in \Fclass$, any sparsity level $s_v \ge 1$, and any
  multiplication precision $M = 1, 2, \ldots$, there exists a function
  $\bfun \in \Bclass$ such that for any architecture with
  \begin{align}
\label{EqnLayerBound}    
    L_g \geq c_1 \big \{ L_f + \log s_f + \log s_v + M \big \}, \;
    \mbox{and} \; s_g \geq c_2 \big \{ s_f + s_v + M \log M \big \},
\end{align}
we have
\begin{align}
  \| b - b^*_f \|_{\Source_\lambda}^2 \leq c_3 \left(
  s_f^{-2\beta/d}\log^2 s_f + s_v^{-2\gamma/d}\log^2 s_v + 2^{-2M}
  \right),
\end{align}
\end{subequations}
where the constants may depend on the fixed problem parameters
$(d,\beta,\gamma,B,K)$, but not on $s_f,s_v$ or $M$.
\end{enumerate}
\end{lemma}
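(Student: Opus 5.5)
Both parts reduce to the sparse-ReLU H\"older approximation theorem of Schmidt-Hieber, which we import as a black box. For $\phi \in C_d^\alpha([0,1]^d,K)$ and any sparsity budget $s$, it gives a network $\tilde\phi$ of depth $O(\log s)$, bounded width, parameters bounded by one and at most $s$ nonzero weights such that
\begin{align*}
\|\tilde\phi - \phi\|_\infty \le c\,K\, s^{-\alpha/d}\log s .
\end{align*}
Since $\|\phi\|_\infty \le K$, the output can be clipped to $[-K,K]$ (two extra ReLU layers) without increasing this error.

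For part (i), we apply the theorem to $\fstar$ with budget $s_f$. Because $K \le \bou$, the clipped network $f_0$ lies in $\Fclass$ once $L_f$ and $\mathbf{p}_f$ are of the prescribed order. Then, using $0 \le \Vlam \le 1$ pointwise,
\begin{align*}
\Errlam(f_0) = \Exs_\Qprob\big[\Vlam (f_0-\fstar)^2\big] \le \|f_0-\fstar\|_\infty^2 \le c_0 K^2 s_f^{-2\beta/d}\log^2 s_f .
\end{align*}
No property of $\Qprob$ is used beyond its being a probability measure.

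For part (ii), fix $f\in\Fclass$. The natural candidate is a network computing $-\tilde v\cdot(f-\tilde f)$, built in three steps.
\begin{itemize}[noitemsep,topsep=0pt,leftmargin=*,align=left]
\item[(a)] Approximate $\Vlam$ by $\tilde v$ with budget $s_v$, clipped to $[0,1]$.
\item[(b)] Approximate $\fstar$ by $\tilde f$ with budget $s_f$, as in part (i).
\item[(c)] Compute $f-\tilde f$ exactly by placing $f$ and $\tilde f$ in parallel, padding the shallower one with identity layers $x=\sigma(x)-\sigma(-x)$ so the depths match, then taking a linear difference. This difference is bounded by $2\bou$.
\end{itemize}
Rescale the inputs into $[0,1]$ (or $[-1,1]$) and feed them to the ReLU multiplication gadget $\mathrm{Mult}_M$ of Schmidt-Hieber, Lemma A.2. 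It has depth $O(M)$ and size $O(M)$ and satisfies $|\mathrm{Mult}_M(x,y)-xy|\le 2^{-M}$. The rescaling constants are absorbed because $\bou$ is fixed. Clip the output to $[-\bou,\bou]$; this is harmless since $|b^*_f|\le |f-\fstar|\le 2\bou$ after adjusting constants.

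Counting parameters, the total depth is $O(L_f+\log s_f+\log s_v+M)$ and the sparsity is $O(s_f+s_v+M\log M)$. Here the identity padding costs a number of parameters proportional to width times depth, which is absorbed into the displayed budget. This matches condition \eqref{EqnLayerBound}.

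The error splits pointwise as
\begin{align*}
b-b^*_f = \big[\mathrm{Mult}-\tilde v\,(f-\tilde f)\big] + (\Vlam-\tilde v)(f-\tilde f) + \Vlam\,(\fstar-\tilde f).
\end{align*}
The three brackets are bounded by $2^{-M}$, $2\bou\, c K s_v^{-\gamma/d}\log s_v$ and $cKs_f^{-\beta/d}\log s_f$ respectively. Squaring and using $(a+b+c)^2\le 3(a^2+b^2+c^2)$ gives a sup-norm bound. Since $\Source_\lambda$ is a probability measure, $\|\cdot\|_{\Source_\lambda}^2\le\|\cdot\|_\infty^2$, and this yields the claim.

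The main obstacle is the bookkeeping in the composition. The parameter-magnitude constraint (entries at most $1$) conflicts with both rescaling by $\bou$ and forming the difference $f-\tilde f$. We would handle this the way Schmidt-Hieber does: realize multiplication by a constant $C$ through $O(\log C)$ extra layers of doubling, which changes only the constants $c_1,c_2$. We also need the approximation theorem's depth and width to be compatible with prescribed architectures. We would take $\mathbf{p}_g$ large enough to host all three parallel subnetworks, and use the monotonicity of $\NNs$ in width and depth (extra layers can act as identities) to conclude for any architecture satisfying \eqref{EqnLayerBound}.
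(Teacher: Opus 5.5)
Your construction follows the paper's proof. Part (i) comes from Schmidt-Hieber's Theorem~5 together with $0\le\Vlam\le1$. For part (ii) you approximate $\Vlam$ and $\fstar$ separately, form $f-\tilde f$ in parallel, feed the rescaled pair into $\mathrm{Mult}_M$, split the pointwise error into multiplication, $\Vlam$-approximation and $\fstar$-approximation terms, and finish with $\|\cdot\|_{\Source_\lambda}\le\|\cdot\|_\infty$. Your three-term identity has sign slips, but they are harmless once absolute values are taken. You are more careful than the paper about weight magnitudes and depth synchronization.

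\textbf{The gap: output clipping.} The class $\Bclass$ is $B$-bounded, but $b^*_f=-\Vlam(f-\fstar)$ is only bounded by $B+K>B$. Clipping to $[-B,B]$ is harmless at a point $x$ only if $|b^*_f(x)|\le B$, since then the clip is a $1$-Lipschitz map that fixes $b^*_f(x)$. The bound $|b^*_f|\le 2B$ that you invoke does not give this. No other construction can fix it either. Suppose some $f\in\Fclass$ is close to $+B$ on a region of positive source probability where $\qdens=0$ (so $\Vlam=1$) and $\fstar\le -K/2$. Then $b^*_f$ is at most about $-(B+K/2)$ there, and every $b\in\Bclass$ misses it by about $K/2$. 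So the left side of (ii) stays bounded away from zero as $s_f,s_v,M\to\infty$.

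The paper's proof has the same hole, unacknowledged: its $b=-2B\,\mathrm{Mult}_M(\cdot,\cdot)$ takes values in $[-2B,2B]$ and is simply asserted to lie in $\Bclass$. There are two repairs, and with either one your argument goes through unchanged:
\begin{itemize}
\item state (ii) only for $f$ with $\|f-\fstar\|_\infty\le B$, which is all that is needed downstream, since \Cref{CorExcessRisk} applies (ii) to the near-optimal $f$ from part (i); or
\item give $\Bclass$ the output bound $2B$.
\end{itemize}

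A smaller point: the black box you quote is slightly off. Theorem~5 with sparsity $s$ has width of order $s/\log s$, not bounded width. Its error is of order $(s/\log s)^{-\alpha/d}$, which is within your $s^{-\alpha/d}\log s$ only when $\alpha\le d$. This affects only logarithmic factors and is inherited from the lemma's statement.
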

}
\noindent
See \Cref{AppLemJointApprox} for the proof.

\subsection{Main argument}

We now turn to the proof of~\Cref{CorExcessRisk}.  Throughout, we use
$(c, c', c_j)$ etc. to denote constants independent of the sample
sizes and network sizes, while allowing dependence on the fixed
smoothness, boundedness, and dimension parameters, including the
H\"older radius $K$.  When this radius is uniform in $\lambda$, the
only displayed $\lambda$-dependence is through $\kaplam$ and $\nlam$.
From the statement of the corollary, recall the shorthand notation $\nlam \defn \min
\big \{ \lambda\numsource, \numtarget \big \}$ and $\kaplam \defn (1 +
\lambda)/\lambda$.  We fix the oracle parameter in
\Cref{ThmExcessRisk}, say $s=1/2$.  We choose $s_v\asymp (\kaplam
\nlam)^{d/(2\gamma+d)}$ for the approximation of $\Vlam$, and
choose the multiplication precision $M\asymp \log(\kaplam\nlam)$. The
network sizes in \Cref{CorExcessRisk} ensure the
bounds~\eqref{EqnLayerBound} hold, so we can
apply~\Cref{LemJointApprox}.  For the (fixed) triple $\beta,\gamma,d$,
the $M\log M$ term is absorbed by the polynomial sparsity terms, and
the depth requirement is of order $\log(\kaplam \nlam)$.  For
$\nlam \geq 2$, choose $\delta=(\kaplam\nlam)^{-A}$ with a sufficiently
large constant $A$.  For the fixed value of $\lambda$ under
consideration, the remainder term $\MyRem(\delta)$
in~\Cref{ThmExcessRisk} is then of lower order than the rate below,
while the additional
$\log(1/\delta)$ factor is absorbed into the displayed logarithmic
terms. From the sparse ReLU entropy bound~\citep[Lemma
  A.1]{schmidt2020nonparametric} and using the polynomial width
condition, we have (up to logarithmic factors) an upper bound of the
form
\begin{align*}
    \frac{\log
      \Num_\numsource(\delta;\Fclass+\Bclass)}{\lambda\numsource} +
    \frac{\log \Num_\numtarget(\delta;\Bclass)}{\numtarget} \lesssim
    \frac{s_f+s_g}{\nlam}\,\big(\log(\kaplam \nlam)\big)^2 .
\end{align*}
With the auxiliary choices above, this quantity can be further upper bounded by
\begin{align*}
\frac{s_f+s_v}{\nlam}\,\big(\log(\kaplam \nlam)\big)^3 .
\end{align*}
For the approximation term, \Cref{ThmRiskDecomp}
and~\Cref{LemJointApprox} imply that
\begin{align*}
  \frac{1}{\lambda} \inf_{f\in\Fclass,b\in\Bclass}\Risk(f,b) \lesssim
  s_f^{-2\beta/d}\log^2 s_f + \kaplam \left( s_f^{-2\beta/d}\log^2 s_f
  +s_v^{-2\gamma/d}\log^2 s_v +2^{-2M} \right).
\end{align*}
Taking $M \asymp \log(\kaplam\nlam)$ makes the multiplication error
negligible relative to the polynomial terms. Thus \Cref{ThmExcessRisk}
is controlled, up to logarithmic factors, by
\begin{align*}
    (1+\kaplam)s_f^{-2\beta/d} +\kaplam s_v^{-2\gamma/d}
  +\frac{s_f+s_v}{\nlam}.
\end{align*}
Since $\kaplam > 1$, we can absorb $1+\kaplam$ into $\kaplam$.
Balancing the first and third terms yields $s_f \asymp (\kaplam
\nlam)^{d/(2\beta+d)}$, while balancing the second and third
yields $s_v\asymp (\kaplam \nlam)^{d/(2\gamma+d)}$. Substituting
these proof choices gives
\begin{align*}
    \Exs\Errlam(\fhat) \lesssim \kaplam^{\frac{d}{2\beta+d}}
    \nlam^{-\frac{2\beta}{2\beta+d}} \big(\log(\kaplam \nlam)\big)^3 +
    \kaplam^{\frac{d}{2\gamma+d}}
    \nlam^{-\frac{2\gamma}{2\gamma+d}} \big(\log(\kaplam \nlam)\big)^3 ,
\end{align*}
as claimed.

\subsection{Proof of \Cref{LemJointApprox}}
\label{AppLemJointApprox}

Part (i) follows directly from Theorem 5 in the
paper~\cite{schmidt2020nonparametric}, so that it remains to prove
part (ii).

Given an arbitrary $f \in \Fclass$, our goal is to approximate the
optimal offset $\bstar_f(x) \defn -\Vlam(x)(f(x)-\fstar(x))$, where
$\Vlam \in C_d^\gamma$ by assumption.  Fix a sparsity index $s \geq 1$
and multiplication parameter $M \geq 1$.  Since $\Vlam \in C_d^\gamma$
and $\fstar \in C_d^\beta$, Theorem 5 in the
paper~\cite{schmidt2020nonparametric} guarantees the existence of
networks $\widehat{v}_\lambda\in\NNs(L_v,\mathbf{p}_v,s_v,1)$ and
$\hat{\fstar}\in\NNs(L_{f^*},\mathbf{p}_{f^*},s_f,B)$ such that
\begin{align*}
  \|\widehat{v}_\lambda-\Vlam\|_\infty^2 \lesssim
  s_v^{-2\gamma/d}\log^2 s_v, \qquad \|\hat{\fstar}-\fstar\|_\infty^2
  \lesssim s_f^{-2\beta/d}\log^2 s_f .
\end{align*}
We construct $\bfun$ using the multiplication network
$\mathrm{Mult}_M$ from Lemma A.2 in the
paper~\cite{schmidt2020nonparametric}. This network approximates the
product $uv$ on $[0,1]^2$ with error $2^{-M}$; by an affine rescaling,
the same construction approximates products on $[0,1]\times[-1,1]$
with the same order of error. After the standard clipping step, we may
take $\widehat{v}_\lambda\in[0,1]$. Since $|f-\hat{\fstar}|\le 2B$,
define
\begin{align*}
    b(x) = -2 B\, \mathrm{Mult}_M \left( \widehat{v}_\lambda(x),
    \frac{f(x)-\hat{\fstar}(x)}{2B} \right).
\end{align*}
By the standard parallelization and composition rules, this network is
contained in $\Bclass$ whenever the pair $(L_g, s_g)$ satisfies
the lower bounds~\eqref{EqnLayerBound}.

For the error analysis, let $\Delta_f(x)=f(x)-\fstar(x)$ and
$\hat{\Delta}_f(x)=f(x)-\hat{\fstar}(x)$. By the triangle inequality, we have

\begin{align*}
|b(x)-\bstar_f(x)| &\leq b(x)+\widehat{v}_\lambda(x)\hat{\Delta}_f(x)|
|+ \widehat{v}_\lambda(x)\hat{\Delta}_f(x)-\Vlam(x)\Delta_f(x)|\\
& \lesssim 2B\,2^{-M} + \|\widehat{v}_\lambda-\Vlam\|_\infty \,
|f(x)-\fstar(x)| + \|\hat{\fstar}-\fstar\|_\infty .
\end{align*}
The functions in $\Fclass$ are bounded by $B$, and
$\|\fstar\|_\infty\le K$ by the H\"older condition on $\fstar$.
Integrating with respect to $\Source_\lambda$ and using the upper
bound $(a+b+c)^2 \leq 3(a^2+b^2+c^2)$ yields
\begin{align*}
    \|b- \bstar_f\|_{\Source_\lambda}^2 \lesssim 2^{-2M}
    + s_v^{-2\gamma/d} \log^2 s_v + s_f^{-2\beta/d} \log^2 s_f .
\end{align*}
This proves the claimed bound.

\subsection{Auxiliary lemmas}

Here we restate an auxiliary lemma from the
paper~\cite{schmidt2020nonparametric} (cf. equation (42)).
\mygraybox{
\begin{lemma}[Concentration of $L_2$ risk]
\label{LemConcentration}
Let $S=(x_1,\dots,x_\numsource)$ be an iid sample from $P$, and let
$\Exs_S$ denote expectation over $S$. Let $\Hil$ be a class of
functions bounded by $B_{\Hil}$, and let $h=h_S$ be a measurable
sample-dependent element of $\Hil$. Define $\|h_S\|_P^2 = \Exs_{X \sim
  P}[h_S(X)^2]$ and $\|h_S\|_n^2 =
\frac{1}{\numsource}\sum_{i=1}^\numsource h_S(x_i)^2$. Then
\begin{align*}
    \left| \Exs_S[\|h_S\|_P^2] - \Exs_S[\|h_S\|_n^2] \right| &\leq
    \frac{B_{\Hil}}{\numsource} \sqrt{\Exs_S[\|h_S\|_P^2]}
    \sqrt{36\numsource \log \Num_\numsource(\delta; \Hil) + 2^8
      \numsource} \notag\\ &\quad + B_{\Hil}^2 \frac{6 \log
      \Num_\numsource(\delta; \Hil) + 11}{\numsource} + 26 \delta
    B_{\Hil},
\end{align*}
where $\Num_\numsource(\delta; \Hil)$ is an almost-sure deterministic
upper bound on the $\delta$-covering number of $\Hil$ with respect to
the empirical $L_\infty$-norm induced by $S$.
\end{lemma}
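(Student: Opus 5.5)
This result is taken from Schmidt-Hieber's paper (equation (42)), so the plan follows the standard argument from that source. Write $g=h_S^2$. Every such $g$ lies in the class $\mathcal{G}=\{h^2: h\in\Hil\}$, whose elements take values in $[0,B_{\Hil}^2]$. The first step is a ghost-sample symmetrization. Let $S'=(x_1',\dots,x_\numsource')$ be an independent copy of $S$. Then
\[
\Exs_S\|h_S\|_P^2=\Exs_{S,S'}\tfrac1\numsource\textstyle\sum_i h_S(x_i')^2,
\]
so the quantity to control is
\[
\Exs\Big[\tfrac1\numsource\textstyle\sum_i\big(h_S(x_i')^2-h_S(x_i)^2\big)\Big].
\]

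Next we discretize. Take a $\delta$-cover $\{h_1,\dots,h_N\}$ of $\Hil$ in empirical $L_\infty$ on the pooled sample $S\cup S'$. Pick a random index $j^*$ with $\|h_S-h_{j^*}\|_\infty\le\delta$ on the pooled points. Replacing $h_S$ by $h_{j^*}$ costs at most $|a^2-b^2|\le 2B_{\Hil}|a-b|$ per term, i.e.\ a total of order $\delta B_{\Hil}$. Careful bookkeeping, also accounting for the norm shift $\|h_{j^*}\|_P\le\|h_S\|_P+\delta$, produces the $26\delta B_{\Hil}$ remainder.

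For the fixed functions $h_j$ we use a normalized maximal inequality. Define
\[
T=\max_j\frac{\big|\sum_i\big(h_j(x_i')^2-h_j(x_i)^2\big)\big|}{r_j},\qquad r_j=\max\big(\sqrt{\numsource\log N}\,B_{\Hil}^{-1},\ \|h_j\|_P\big)\cdot B_{\Hil}.
\]
Each summand is a difference of i.i.d.\ terms, so it is centered. Its variance is at most $2\Exs h_j^4\le 2B_{\Hil}^2\|h_j\|_P^2$, and it is bounded by $B_{\Hil}^2$. Bernstein's inequality plus a union bound over the $N$ cover elements then gives $\Exs T^2\lesssim \numsource\log N+\numsource$. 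The specific constants $36$ and $2^8$ come from integrating this tail.

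Finally we combine, using Cauchy--Schwarz in the form
\[
\Exs\big[T\,r_{j^*}\big]\le(\Exs T^2)^{1/2}(\Exs r_{j^*}^2)^{1/2}.
\]
Bound $r_{j^*}\le B_{\Hil}(\|h_S\|_P+\delta)$ plus the floor term. Dividing by $\numsource$, this gives the leading term
\[
\frac{B_{\Hil}}{\numsource}\sqrt{\Exs\|h_S\|_P^2}\,\sqrt{36\numsource\log\Num+2^8\numsource}.
\]
The floor term in $r_j$, multiplied by $\Exs T$, yields the $B_{\Hil}^2(6\log\Num+11)/\numsource$ piece. The same argument applies to the absolute value, with the roles of $S$ and $S'$ swapped.

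The main obstacle is the step where the normalization by $\|h_j\|_P$ is random through $j^*$. Because of this, Cauchy--Schwarz must be applied after taking the maximum over $j$, and $\Exs\|h_{j^*}\|_P^2$ must be related back to $\Exs\|h_S\|_P^2$ with only an $O(\delta)$ loss. I would therefore follow Schmidt-Hieber's exact bookkeeping rather than re-derive the constants.
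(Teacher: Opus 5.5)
\paragraph{Where the proposal breaks.} Your skeleton is the argument from Schmidt-Hieber that the paper cites in place of a proof: a ghost sample, a maximum $T$ over cover centers normalized by $r_j$, Bernstein plus a union bound, and Cauchy--Schwarz after the maximum. But the one point where you depart from it breaks the argument. You take the $\delta$-cover in empirical $L_\infty$ on the pooled sample $S\cup S'$. The centers $h_j$ are then functions of $(S,S')$; an almost-sure deterministic bound on the size of an empirical cover does not make the cover itself deterministic. Three steps fail as a result.
(a) $\sum_i\bigl(h_j(x_i')^2-h_j(x_i)^2\bigr)$ is no longer a sum of independent centered terms, so Bernstein plus a union bound over ``fixed'' $h_j$ is unavailable.
(b) Agreement to within $\delta$ at the $2n$ sample points gives no control of $\|h_{j^*}-h_S\|_P$. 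So the norm shift $\|h_{j^*}\|_P\le\|h_S\|_P+\delta$ is unjustified, yet you need it to bound $\Exs r_{j^*}^2$ in terms of $\Exs\|h_S\|_P^2$.
(c) Your cover has the cardinality of a $2n$-point covering number, not the $n$-point quantity $\Num_n(\delta;\Hil)$.

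Schmidt-Hieber avoids all three by using a deterministic $\delta$-cover of the class in the supremum norm over the whole input domain. The centers are then fixed, Bernstein applies verbatim, and $\|h_S-h_{j^*}\|_\infty\le\delta$ controls both the discretization error and the population norm. This is also the entropy the paper uses downstream: the sparse ReLU entropy bound it cites is a sup-norm bound, which dominates any empirical one. So the direct repair is to run your argument with the sup-norm covering number in place of $\Num_n(\delta;\Hil)$.

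\paragraph{Proving the literal empirical-cover version.} Here you need a symmetrization step that your sketch lacks:
(i) condition on the unordered pairs $\{x_i,x_i'\}$ and, by exchangeability, rewrite the differences with independent Rademacher signs;
(ii) choose the cover as a function of the pooled points, so that it is fixed conditionally;
(iii) take the maximum over all centers, since $h_S$ is not swap-invariant, of a process normalized by a swap-invariant quantity rather than $\|h_j\|_P$ --- for instance $\|h_j\|_{2n}$, where $\|h\|_{2n}^2=\frac{1}{2n}\sum_i\bigl(h(x_i)^2+h(x_i')^2\bigr)$;
(iv) apply Hoeffding conditionally.
Because $\Exs\|h_S\|_{2n}^2=\tfrac12\bigl(\Exs\|h_S\|_P^2+\Exs\|h_S\|_n^2\bigr)$, this gives a self-bounding inequality that can be solved for a bound of the stated shape. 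The pooled cover is handled by $\Num_{2n}(2\delta;\Hil)\le\Num_n(\delta;\Hil)^2$. However, this route does not reproduce the constants $36$, $2^8$, $6$, $11$, $26$.

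\paragraph{Scale of the floor in $r_j$.} Separately, the floor in your $r_j$ is on the wrong scale. As written, $r_j=\max\bigl(\sqrt{n\log N},\,B_{\Hil}\|h_j\|_P\bigr)$, which equals $\sqrt{n\log N}$ as soon as $n\log N\ge B_{\Hil}^4$. Then nothing is localized, and the best you can extract is the slow rate $B_{\Hil}^2\sqrt{\log N/n}$, not the $\sqrt{\Exs\|h_S\|_P^2}$-weighted leading term. The floor must be of order $\sqrt{\log N/n}$, times a power of $B_{\Hil}$.
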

}

\section{Generalization to Bregman divergences}
\label{SecGeneralization}

The construction in~\Cref{SecMethods} is the least-squares
instantiation of a more general Bregman-based method.  In this
section, we describe this more general theory, and its instantiations
for $K$-ary classification problems, as described
in~\Cref{SecExtensions}.

\subsection{Bregman formulation and decomposition}

Let $\realstar \defn \real \cup \{\infty \}$ denote the extended real
line.  Any Bregman divergence is defined by a function $\psi : \Omega
\to \realstar$ that is strictly convex, continuously differentiable,
and of Legendre type~\citep{rockafellar1970convex}.  The function
$\psi$ has a Fenchel conjugate $\psi^*: \Omega^* \to \realstar$, and
the gradient mapping $\nabla \psi$ maps between the interior of
$\Omega$ and $\Omega^*$ in a one-to-one way.

The simplest example is the quadratic function $\psi(z) = \frac{1}{2}
\|z\|_2^2$ with domain $\Omega = \real^K$.  It is self-dual with
Fenchel conjugate $\psi^* = \psi$ and domain $\Omega^* = \real^K$.
For $K$-aray classification problems, the logistic function $\psi(z) =
\log \big(\sum_{k=1}^K e^{z_k} \big)$ arises in the standard loss
function.  It has domain $\Omega = \real^K$ and Fenchel conjugate
$\psi^*$ given by the entropy function defined on the simplex
$\Omega^* = \{ u \in \real^K \mid u \geq 0, \sum_{j=1}^k u_j = 1 \}$.

Given a prediction function $f : \Xspace \to \Omega$, we define the
mean function associated with $f$ as
\begin{subequations}
\begin{align}
\mu_f(x) \coloneqq \nabla \psi(f(x)) \in \Omega^*,
\end{align}
along with the optimal target in mean space
\begin{align}
\mu^\star(x) \coloneqq \Exs[Y \mid X = x] \in \Omega^*,
\end{align}
The canonical pointwise loss that compares this pair is given by
\begin{align}
\bar{\ell}_\psi(f;x) \coloneqq D_{\psi^*}\bigl(\mu^\star(x),
\mu_f(x)\bigr).
\end{align}
Let $\pdens$ and $\qdens$ denote the source and target densities, and
set
\begin{align}
\label{EqnDefnRho}  
\Vlam(x) \coloneqq \frac{\pdens(x)}{\pdens(x)+\lambda\qdens(x)},
\qquad d\rho_\lambda(x) \coloneqq
\bigl(\pdens(x)+\lambda\qdens(x)\bigr)dx.
\end{align}        
Finally, for a convex function $\phi$ and a scalar $\eta \in [0,1]$,
we define the weighted Jensen divergence
\begin{align}
\label{EqnJensen}  
J_\phi^\eta(u,v) & \defn \eta \phi(u) + (1 - \eta) \phi(v) - \phi
\bigl(\eta u + (1 - \eta) v \bigr).
\end{align}
\end{subequations}

Now let us describe the Bregman analogue of \tilt, which is based on
inserting an auxiliary correction in the dual mean parameter. Consider
a pair of functions $(\ffun, \bfun)$ that satisfy the pointwise
inclusion $\mu_f(x) + \bfun(x) \in \Omega^*$ pointwise, we define:
\mygraybox{
  \label{DefForwardBregmanIIW}
  {\bf{Bregman-\tilt:}}
\begin{align*}
  \Riskpsi(f,b) \defn & \Exs_{X \sim \Prob} \Bigl[
    D_{\psi^*}\bigl(\mu_f(X) + \bfun(X), \mu^\star(X)\bigr) \Bigr] +
  \lambda \Exs_{\Xtil \sim \Qprob} \Bigl[ D_{\psi^*}\bigl(\mu_f(\Xtil)
    + \bfun(\Xtil), \mu_f(\Xtil)\bigr) \Bigr].
\end{align*}
}
Observe that the first term fits a corrected source prediction to the
oracle conditional mean, whereas the second term penalizes the
correction on target inputs.  It is a natural generalization of the
\tilt least-squares objective: in particular, if we use the Bregman
function $\psi(z) = \frac{1}{2} z^2$, then we have the equivalences
\begin{align*}
  D_{\psi^*}\bigl(\mu_f(X) + \bfun(X), \mu^\star(X)\bigr) \Bigr] \; =
    \; \big(\mu_f(X) + \bfun(X) - \mu^\star(X) \big)^2 \quad
    \mbox{and} \quad D_{\psi^*}\bigl(\mu_f(\Xtil) + \bfun(\Xtil),
    \mu_f(\Xtil)\bigr) = \bfun^2(\Xtil),
\end{align*}
so that we recover the original \tilt least-squares formulation.

To state the decomposition, introduce the natural parameters
\begin{subequations}
\begin{align}
  \thetastar(x) \coloneqq \nabla\psi^*\bigl(\mu^\star(x)\bigr),
  \qquad \theta_f(x) \coloneqq \nabla\psi^*\bigl(\mu_f(x)\bigr)=f(x),
\end{align}
along with their $\lambda$-dependent barycenter
\begin{align}
\bar{\theta}_{f,\lambda}(x) \coloneqq \Vlam(x)\thetastar(x) +
\bigl(1-\Vlam(x)\bigr)\theta_f(x).
\end{align}
\end{subequations}

\mygraybox{
\begin{proposition}[Exact Bregman decomposition]
\label{PropForwardBregmanIIW}
For any admissible pair $(f,b)$, we have the decomposition
\begin{align}
\label{EqForwardBregmanDecomp}
\Riskpsi(f,b) &= \int J_{\psi}^{\Vlam(x)} \bigl(\thetastar(x),
\theta_f(x)\bigr)d\rho_\lambda(x) + \int D_{\psi} \Bigl(
\bar{\theta}_{f,\lambda}(x), \nabla\psi^*\bigl(\mu_f(x)+b(x)\bigr)
\Bigr)d\rho_\lambda(x).
\end{align}
\end{proposition}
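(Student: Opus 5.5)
The plan is a purely pointwise argument: rewrite both Bregman terms in $\Riskpsi$ in natural-parameter coordinates, combine them with the weights $\pdens$ and $\lambda\qdens$, and use a barycentric ``three-point'' identity for Bregman divergences. Integrating against $dx$ then gives~\eqref{EqForwardBregmanDecomp}. This mirrors the least-squares completion of the square in the proof of~\Cref{ThmRiskDecomp}, with the barycenter $\bar\theta_{f,\lambda}$ playing the role of $-\bstar_f$ shifted by $f$.

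\emph{Step 1 (duality).} Fix $x$ and put $u(x) \defn \mu_f(x)+\bfun(x) \in \Omega^*$ and $z(x) \defn \nabla\psi^*(u(x))$, so that $\nabla\psi(z)=u$. The first step is the standard Legendre duality identity
\begin{align*}
D_{\psi^*}(u,\mu) = D_\psi\bigl(\nabla\psi^*(\mu), \nabla\psi^*(u)\bigr).
\end{align*}
To check it, substitute $\psi^*(u)=\langle z,u\rangle-\psi(z)$ and the analogous expression for $\psi^*(\mu)$ into the definition of $D_{\psi^*}$ and simplify. Applying this identity with $\mu=\mu^\star$ and with $\mu=\mu_f$ gives
\begin{align*}
D_{\psi^*}(u,\mu^\star)=D_\psi(\thetastar,z), \qquad D_{\psi^*}(u,\mu_f)=D_\psi(\theta_f,z),
\end{align*}
where $\theta_f=f$. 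Consequently the integrand of $\Riskpsi$ equals
\begin{align*}
(\pdens+\lambda\qdens)\bigl[\Vlam\, D_\psi(\thetastar,z)+(1-\Vlam)\,D_\psi(\theta_f,z)\bigr],
\end{align*}
because $\pdens=(\pdens+\lambda\qdens)\Vlam$ and $\lambda\qdens=(\pdens+\lambda\qdens)(1-\Vlam)$.

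\emph{Step 2 (three-point identity).} Next I will show that for any $a,c,z$ and any $\eta\in[0,1]$, with $m=\eta a+(1-\eta)c$,
\begin{align*}
\eta D_\psi(a,z)+(1-\eta)D_\psi(c,z) = J_\psi^\eta(a,c) + D_\psi(m,z).
\end{align*}
Both sides equal
\begin{align*}
\eta\psi(a)+(1-\eta)\psi(c)-\psi(z)-\langle\nabla\psi(z),m-z\rangle.
\end{align*}
On the left this follows because the linear terms combine into $m-z$. On the right it follows because the $-\psi(m)$ inside $J_\psi^\eta$ cancels the $+\psi(m)$ inside $D_\psi(m,z)$. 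Applying the identity with $\eta=\Vlam(x)$, $a=\thetastar(x)$ and $c=\theta_f(x)$ gives $m=\bar\theta_{f,\lambda}(x)$. Multiplying by $(\pdens+\lambda\qdens)$ and integrating against $dx$ yields exactly the two integrals against $d\rho_\lambda$ in~\eqref{EqForwardBregmanDecomp}.

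\emph{Main obstacle.} The algebra is routine; the delicate part is the bookkeeping of domains and conventions.

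\begin{itemize}
\item $\nabla\psi^*$ is only defined (and a bijection) on the interior of $\Omega^*$, so I will interpret admissibility as requiring $\mu_f+\bfun$ and $\mu^\star$ to lie in that interior.
\item For the logistic $\psi$, $\nabla\psi^*$ is defined only up to an additive constant vector. I will fix a representative, for example the mean-centered one, and note that $D_\psi$ and $J_\psi^\eta$ are invariant under a common shift because $\psi(z+t\mathbf{1})=\psi(z)+t$.
\item I must also respect the argument order of $D_{\psi^*}$ in the definition of Bregman-\tilt; the duality in Step~1 swaps the order, and this is what places $z$ in the second slot.
\item Finally, integrability of each term is needed to split the integral. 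Both terms on the right-hand side are nonnegative, so the identity holds in $[0,\infty]$ without further assumptions.
\end{itemize}
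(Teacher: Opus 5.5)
Your proof is correct and follows the paper's route. You rewrite both terms with the primal--dual identity $D_{\psi^*}(u,v)=D_\psi(\nabla\psi^*(v),\nabla\psi^*(u))$, factor the weights as $p=(p+\lambda q)v_\lambda$ and $\lambda q=(p+\lambda q)(1-v_\lambda)$, and apply the weighted Jensen (three-point) identity with $\eta=v_\lambda(x)$.

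Beyond the paper, you verify both identities directly (the paper cites them as classical) and add domain and logit-shift bookkeeping. One small point there: with a mean-centered representative, $\theta_f$ differs from $f$ by an $x$-dependent multiple of $\mathbf{1}$, so you need invariance of $D_\psi$ and $J_\psi^\eta$ under separate shifts of each argument, not just a common one. That stronger invariance does follow from $\psi(z+t\mathbf{1})=\psi(z)+t$.
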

}

\begin{proof}
From classical results, Bregman divergences satisfy the primal-dual
identity
\begin{align*}
D_{\psi^*}(u,v) = D_{\psi}\bigl(\nabla\psi^*(v),
\nabla\psi^*(u)\bigr),
\end{align*}
Using this fact, we can rewrite the integrand that defines
$\Riskpsi(f,b)$ as
\begin{align*}
\pdens(x) D_{\psi} \Bigl( \thetastar(x),
\nabla\psi^*\bigl(\mu_f(x)+b(x)\bigr) \Bigr) + \lambda\qdens(x)
D_{\psi} \Bigl( \theta_f(x), \nabla\psi^*\bigl(\mu_f(x)+b(x)\bigr)
\Bigr).
\end{align*}
For any convex $\phi$ and any $\eta\in[0,1]$, the weighted Jensen
divergence $J_\phi^\eta$ satisfies the identity
\begin{align*}
  \eta D_\phi(u,z)+(1-\eta)D_\phi(v,z) = J_\phi^\eta(u,v) +
  D_\phi\bigl(\eta u+(1-\eta)v,z\bigr).
\end{align*}
We apply this fact with the choices $\phi=\psi$, $\eta=\Vlam(x)$,
$u=\thetastar(x)$, $v=\theta_f(x)$, and
$z=\nabla\psi^*(\mu_f(x)+b(x))$, thereby obtaining the
claim~\eqref{EqForwardBregmanDecomp}.
\end{proof}

\paragraph{Profiled target limit:} We now  describe the natural
generalization of~\Cref{ThmRiskDecomp}.  By inspection of
equation~\eqref{EqForwardBregmanDecomp}, if we minimize the function
$\Riskpsi(f, \bfun)$ over the choice of $\bfun$, then the second term
vanishes and we obtain
\begin{align*}
\Jpsilam(f) \coloneqq \inf_b \Riskpsi(f,b) = \int J_{\psi}^{\Vlam(x)}
\bigl(\thetastar(x), \theta_f(x)\bigr)d\rho_\lambda(x).
\end{align*}
Note that the function $\bflam$ that achieves the infimum is defined
pointwise by the relation $\nabla\psi^*\bigl(\mu_f(x)+\bflam(x)\bigr)
= \bar{\theta}_{f,\lambda}(x)$, or equivalently, via the Legendre
properties 
\begin{align}
\bflam(x) = \nabla\psi \bigl( \bar{\theta}_{f,\lambda}(x) \bigr) -
\mu_f(x),
\end{align}
where we have used the fact that $\nabla \psi$ is the inverse of
$\nabla \psi^*$.

In the least squares case, taking the limit of the profiled risk
as $\lambda \rightarrow 0^+$ led to the $L^2(\Qprob)$-target risk.
We now state and prove the generalization of this property to
the Bregman setting:
\mygraybox{
\begin{proposition}[Small-$\lambda$ limit]
\label{PropBregmanLimit}
Assume that $\mu^\star(x)$ lies in the interior of $\Omega^*$ for
almost every $x$, and that the displayed quantities are integrable.
Then for any predictor $f$, we have
\begin{align*}
 \lim_{\lambda \rightarrow 0^+} \frac{1}{\lambda}\Jpsilam(f) & =
 \Exs_{X \sim \TargetX} \Bigl[ D_{\psi^*}\bigl(\mu^\star(X),\mu_f(X)
   \bigr) \Bigr].
\end{align*}
\end{proposition}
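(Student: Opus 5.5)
The plan is a pointwise computation followed by a monotone convergence argument, since the weighted Jensen divergence turns into a Bregman divergence in the limit.

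\textbf{Step 1 (rewrite the integrand).} Set $\epsilon_\lambda(x) \defn 1-\Vlam(x) = \lambda\qdens(x)/(\pdens(x)+\lambda\qdens(x))$. This gives the identity $\tfrac{1}{\lambda}(\pdens+\lambda\qdens) = \qdens/\epsilon_\lambda$. Writing $u=\thetastar(x)$ and $v=\theta_f(x)$, we then have
\begin{align*}
\frac{1}{\lambda}\Jpsilam(f) = \int \qdens(x)\, \frac{g_x(\epsilon_\lambda(x))}{\epsilon_\lambda(x)}\, dx,
\end{align*}
where $g_x(\epsilon) \defn (1-\epsilon)\psi(u) + \epsilon\psi(v) - \psi\bigl(u+\epsilon(v-u)\bigr)$. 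This is just $J_\psi^{1-\epsilon}(u,v)$ rewritten. The interior assumption on $\mu^\star(x)$ ensures that $u=\nabla\psi^*(\mu^\star(x))$ is a finite point of $\Omega$ at which $\psi$ is differentiable.

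\textbf{Step 2 (pointwise limit).} The map $\epsilon\mapsto g_x(\epsilon)$ is concave on $[0,1]$, because it is affine minus a convex function of $\epsilon$, and it satisfies $g_x(0)=0$. Hence $g_x(\epsilon)/\epsilon$ is nonincreasing in $\epsilon$. As $\epsilon\downarrow0$ it increases to the right derivative
\begin{align*}
g_x'(0^+) = \psi(v)-\psi(u)-\langle\nabla\psi(u),v-u\rangle = D_\psi(\theta_f(x),\thetastar(x)).
\end{align*}
By the primal--dual identity already used in the proof of \Cref{PropForwardBregmanIIW}, this equals $D_{\psi^*}(\mu^\star(x),\mu_f(x))$. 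On $\{\pdens>0\}$ the quantity $\epsilon_\lambda(x)$ decreases monotonically to $0$ as $\lambda\downarrow0$, so the integrand increases monotonically to $\qdens(x)D_{\psi^*}(\mu^\star(x),\mu_f(x))$.

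\textbf{Step 3 (interchange).} Since the integrands are nonnegative and monotone in $\lambda$, the monotone convergence theorem lets us pass the limit inside the integral without any domination argument. Integrability of the limit is part of the hypothesis, though it is not even needed for the equality in $[0,\infty]$.

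\textbf{Main obstacle: the set $\{\pdens=0,\ \qdens>0\}$.} There $\epsilon_\lambda\equiv1$ and $J^0_\psi\equiv0$, so the integrand vanishes for every $\lambda$. The limit therefore equals $\Exs_{\TargetX}\bigl[\mathbf{1}\{\pdens>0\}\,D_{\psi^*}(\mu^\star,\mu_f)\bigr]$. The stated identity holds exactly when $\TargetX(\pdens=0)=0$, which I would make explicit as part of the assumption that "the displayed quantities are integrable". Otherwise I would state the result with this indicator.

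As a sanity check, the least-squares case $\psi=\tfrac12\|\cdot\|^2$ recovers $\Errlam(f)\to\Exs_\Qprob[(f-\fstar)^2]$ on the same support. This matches \Cref{ThmRiskDecomp} only on $\{\pdens>0\}$, so the support caveat appears there too.
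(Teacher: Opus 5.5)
Your argument is correct. It uses the same pointwise idea as the paper: expand $J_\psi^{1-\varepsilon}(u,v)$ to first order in $\varepsilon_\lambda(x)=1-\Vlam(x)$, use $(\pdens+\lambda\qdens)\varepsilon_\lambda=\lambda\qdens$, and apply the duality $D_\psi(\theta_f,\thetastar)=D_{\psi^*}(\mu^\star,\mu_f)$. Where you differ is in how the limit is justified, and that difference matters.

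The paper writes $J_\psi^{1-\varepsilon}(u,v)=\varepsilon D_\psi(v,u)+o(\varepsilon)$ and then simply ``integrates over $x$''. Since that $o(\varepsilon)$ is not uniform in $x$, the interchange of limit and integral is left unjustified. Your observation that $\epsilon\mapsto g_x(\epsilon)$ is concave with $g_x(0)=0$ makes the difference quotients monotone, so monotone convergence does the interchange. Equivalently, it supplies the dominating bound $g_x(\epsilon)/\epsilon\le D_\psi(\theta_f,\thetastar)$ that a dominated-convergence version of the paper's argument would need. It also gives the identity in $[0,\infty]$ without the integrability hypothesis.

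Your support caveat is correct, and it is a defect of the statement and of the paper's proof, not of yours. The Taylor step tacitly assumes $\varepsilon_\lambda(x)\to 0$. That fails on $\{\pdens=0\}$, where $\Vlam\equiv 0$ and the integrand vanishes for every $\lambda$. With disjoint supports, for instance, $\Jpsilam(f)\equiv 0$ for all $\lambda$, while the right-hand side can be positive. This condition is not implied by integrability, so state it as a separate hypothesis, e.g.\ $\int_{\{\pdens=0\}}\qdens\,dx=0$. The same caveat applies to the paper's remark that $\Errlam(f)$ tends to the target excess risk as $\lambda\to 0$.

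One small bookkeeping point: your Step 1 rewriting divides by $\epsilon_\lambda(x)$, which vanishes on $\{\qdens=0\}$. Restrict that integral to $\{\qdens>0\}$; nothing is lost, because the original integrand is zero elsewhere.
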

}

\begin{proof}
For fixed $u,v$ and any $\varepsilon \in [0,1]$, by a first-order
Taylor series expansion, we can write
\begin{align*}
J_\psi^{1-\varepsilon}(u,v) = \varepsilon D_\psi(v,u) +
o(\varepsilon),
\end{align*}
valid as $\varepsilon \rightarrow 0^+$.  We apply this relation
pointwise in $x$ with the choices
\begin{align*}
  \varepsilon_\lambda(x) \defn 1 -
  \Vlam(x) = \frac{\lambda\qdens(x)}{\pdens(x)+\lambda\qdens(x)},
\end{align*}
and $u \defn \thetastar(x)$ and $v \defn \theta_f(x)$.  Doing so
yields
\begin{align*}
    \bigl(\pdens(x) + \lambda \qdens(x)\bigr)
    J_\psi^{\Vlam(x)}(\thetastar(x),\theta_f(x)) = \lambda \qdens
    D_\psi(\theta_f(x),\thetastar(x)) + o(\lambda),
\end{align*}
where have used the equivalence $\bigl(\pdens(x) + \lambda
\qdens(x)\bigr) \varepsilon_\lambda(x) = \lambda \qdens(x)$.  Finally,
we use the duality relation $D_\psi(\theta_f,\thetastar)
=D_{\psi^*}(\mu^\star,\mu_f)$.  Integrating over $x$ completes the
proof.
\end{proof}

\subsection{Special cases}

Let us describe how this general formulation recovers both
least-squares \tilt described in the main body, as well as
a version of $K$-ary classification.

\subsubsection{Least-squares \tilt}

Consider the Legendre function $\psi(z) = \frac{1}{2} \|z\|_2^2$.  It
satisfies the dual relation $\psi = \psi^*$ and $\nabla \psi(z) = z$,
so that $\mu_f(x) = f(x)$ and $\mu^\star(x) = \fstar(x)$.
Consequently, the $\phi$-risk for this choice takes the form
\begin{align*}
\Riskpsi(f, \bfun) = \Exs_{X \sim \SourceX} \Bigl[
  \frac{1}{2}\|f(X)-f^\star(X) + \bfun(X)\|_2^2 \Bigr] + \lambda
\Exs_{\Xtil \sim \TargetX} \Bigl[ \frac{1}{2} \|\bfun(\Xtil)\|_2^2
  \Bigr].
\end{align*}
Up to the irrelevant factor $1/2$, this is exactly the original \tilt
population objective, as described in~\Cref{SecMethods}.

\subsubsection{KL surrogates for logistic regression}

For $K$-class logistic regression, the natural generator is
\begin{align*}
\psi(z) = \log\Bigl(\sum_{k=1}^K e^{z_k}\Bigr), \qquad
\mu_f(x)=\nabla\psi(f(x))=\pi(f(x)),
\end{align*}
where $\pi$ denotes the softmax map. Its conjugate is negative entropy
on the probability simplex,
\begin{align*}
    \psi^*(u)=\sum_{k=1}^K u_k\log u_k,
\end{align*}
and hence $D_{\psi^*}(u,v)=\mathrm{KL}(u\|v)$ corresponds to the
Kullback--Leibler divergence.  Writing $\rho(x) \coloneqq
\Prob(Y=\cdot\mid X=x)$, the population Bregman-\tilt objective
formally becomes
\begin{align}
    \Riskpsi(f,b)
    =
    \Exs_{X \sim \SourceX}
    \Bigl[
        \mathrm{KL}\bigl(\mu_f(X)+b(X)\,\|\,\rho(X)\bigr)
    \Bigr]
    +
    \lambda
    \Exs_{\Xtil \sim \TargetX}
    \Bigl[
        \mathrm{KL}\bigl(\mu_f(\Xtil)+b(\Xtil)\,\|\,\mu_f(\Xtil)\bigr)
    \Bigr].
    \label{EqPopulationKLTILT}
\end{align}
The corresponding profiled objective can be written explicitly. Let
$\theta^\rho(x)\coloneqq\nabla\psi^*(\rho(x))$ be any canonical logit
representative of the oracle class-probability vector. By
\Cref{PropForwardBregmanIIW},
\begin{align*}
    \Jpsilam^{\mathrm{KL}}(f)
    \coloneqq
    \inf_b \Riskpsi(f,b)
    =
    \int
    J_{\psi}^{\Vlam(x)}
    \bigl(\theta^\rho(x), f(x)\bigr)d\rho_\lambda(x).
\end{align*}
For the log-partition generator, this weighted Jensen divergence is
\begin{align*}
    J_{\psi}^{\eta}(\theta,\theta')
    =
    \eta \log\sum_{k=1}^K e^{\theta_k}
    +
    (1-\eta)\log\sum_{k=1}^K e^{\theta'_k}
    -
    \log\sum_{k=1}^K
    e^{\eta\theta_k+(1-\eta)\theta'_k}.
\end{align*}
Choosing the equivalent normalized representatives
$\theta^\rho(x)=\log\rho(x)$ and
$\theta_f(x)=\log\mu_f(x)$, for which both log-partition terms vanish,
gives
the simpler probability-coordinate form
\begin{align*}
    \Jpsilam^{\mathrm{KL}}(f)
    =
    \int
    -\log
    \Biggl\{
        \sum_{k=1}^K
        \rho_k(x)^{\Vlam(x)}
        \mu_{f,k}(x)^{1-\Vlam(x)}
    \Biggr\}
    d\rho_\lambda(x).
\end{align*}
Thus the KL profiled objective is a covariate-weighted Chernoff-type
discrepancy between the oracle class probabilities and the predictor.
In the small-$\lambda$ limit, \Cref{PropBregmanLimit} gives
\begin{align*}
    \frac{1}{\lambda}\Jpsilam^{\mathrm{KL}}(f)
    \longrightarrow
    \Exs_{\Xtil\sim\TargetX}
    \Bigl[
        \mathrm{KL}\bigl(\rho(\Xtil)\,\|\,\mu_f(\Xtil)\bigr)
    \Bigr].
\end{align*}
This expression gives the correct geometry, but it is not yet a
usable training objective. First, hard labels cannot be substituted
directly for $\rho(x)$: the term
$\mathrm{KL}(\mu_f(x)+b(x)\|y)$ is infinite unless the corrected
prediction puts all mass on the observed class. Second, a mean-space
correction must obey the simplex constraint
$\mu_f(x)+b(x)\in\Delta^{K-1}$, which is awkward for neural-network
parametrizations.

\paragraph{The \kltilt surrogate.}
We address these issues by using a soft teacher and by applying the
auxiliary correction in logit space. Let
$\pi_T(z)\coloneqq\softmax(z/T)$ be the temperature-$T$ softmax, and
let $\tau:\Xspace\to\mathbb{R}^K$ be teacher logits. The direct
surrogate for \eqref{EqPopulationKLTILT} is
\begin{align}
    \LossHat_{\kltilt}(f,b;\tau)
    \coloneqq\;&
    \frac{T^2}{\numsource}
    \Sum{i}{\numsource}
    \mathrm{KL}
    \Bigl(
        \pi_T\bigl(f(x_i)+b(x_i)\bigr)
        \,\Big\|\,
        \pi_T\bigl(\tau(x_i)\bigr)
    \Bigr)
    \notag\\
    &+
    \frac{\lambda T^2}{\numtarget}
    \Sum{j}{\numtarget}
    \mathrm{KL}
    \Bigl(
        \pi_T\bigl(f(\xtil_j)+b(\xtil_j)\bigr)
        \,\Big\|\,
        \pi_T\bigl(f(\xtil_j)\bigr)
    \Bigr).
    \label{EqKLTILT}
\end{align}
The first KL term is the teacher-based replacement for the source
Bregman loss, and the second term is the target-side \tilt penalty.

\paragraph{Relation to \kdtilt.}
The \kdtilt objective used in the experiments keeps the same
target-side penalty as \eqref{EqKLTILT}, but replaces the source term
by the usual distillation direction,
\begin{align*}
    \mathrm{KL}
    \Bigl(
        \pi_T\bigl(\tau(x_i)\bigr)
        \,\Big\|\,
        \pi_T\bigl(f(x_i)+b(x_i)\bigr)
    \Bigr),
\end{align*}
optionally mixed with the supervised cross-entropy loss, as in
\eqref{EqForwardStabilizedSurrogate}. Thus \kltilt is the most direct
KL analogue of the Bregman-\tilt construction, while \kdtilt is its
standard knowledge-distillation surrogate.

\section{Additional results from numerical studies}
\label{AppAdditionalResults}

This appendix collects numerical results that complement the main
experiments.  The synthetic plots give diagnostics for aspects that are
harder to read from the main figures: how the one-dimensional solution
changes with $\lambda$, how the learned auxiliary component behaves, and
how the relative sizes of $\Fclass$ and $\Bclass$ affect the fit.  The
CIFAR-100 plots report additional corruption types and the top-5 metric.

\subsection{Synthetic Regression Diagnostics}
\label{AppSyntheticAdditional}

\myparagraph{Effect of $\lambda$ in a misspecified one-dimensional problem}
\textbf{Setup.}  We use the same one-dimensional misspecified linear
problem as in \Cref{FigSyntheticTiltLinearAB}A--B.  The target
distribution is fixed at the Beta$(2,5)$ endpoint, while the source
distribution is moved from the matched case toward the Beta$(5,2)$
endpoint.  The deployed class $\Fclass$ is a degree-three shifted
Legendre linear model, and the auxiliary class $\Bclass$ is a richer
Gaussian-kernel ridge expansion.  For source corruption levels
$0,0.3,0.7$, and $1.0$, we sweep $\lambda$ and report target-test MSE
for \tilt and exact \rl, with source ERM shown as a horizontal
reference.

\begin{figure}[h!]
  \begin{center}
  \widgraph{0.88\textwidth}{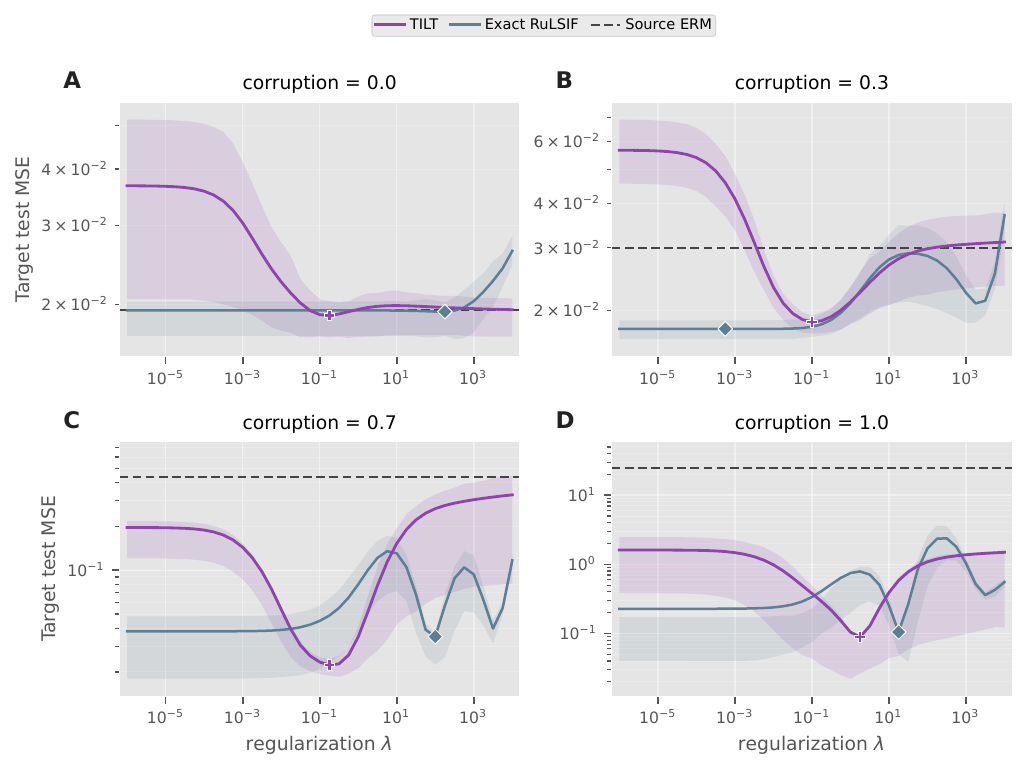}
  \caption{\textbf{Regularization sensitivity in the one-dimensional
      synthetic regression experiment.}  Each panel fixes a source
    corruption level and plots target-test MSE as a function of
    $\lambda$.  Curves show means over trials and shaded bands show
    interquartile ranges.  The \tilt and exact \rl curves have visibly
    different $\lambda$ dependence, and the favorable range for \tilt
    changes with the source corruption level.}
  \label{FigSyntheticLambdaSensitivity}
  \end{center}
\end{figure}

In this one-dimensional problem, the useful range of $\lambda$ shifts
with the corruption level, and the curve for \tilt behaves differently
from exact \rl.  Large $\lambda$ moves the estimator toward the
source-ERM reference, while small $\lambda$ gives the auxiliary
component a weaker target penalty.  The plot is therefore mainly a
diagnostic of how the fitted solution changes with $\lambda$ in this
particular misspecified setting.

\myparagraph{Auxiliary behavior in the one-dimensional problem}
\textbf{Setup.}  This diagnostic inspects the learned auxiliary
component $b$ in the same one-dimensional experiment.  The diagnostics
are computed at corruption levels $0$ and $1$ while sweeping $\lambda$,
and they track
how much energy $b$ carries on source and target covariates together
with its interaction with the source residual structure.

\begin{figure}[h!]
  \begin{center}
  \widgraph{0.95\textwidth}{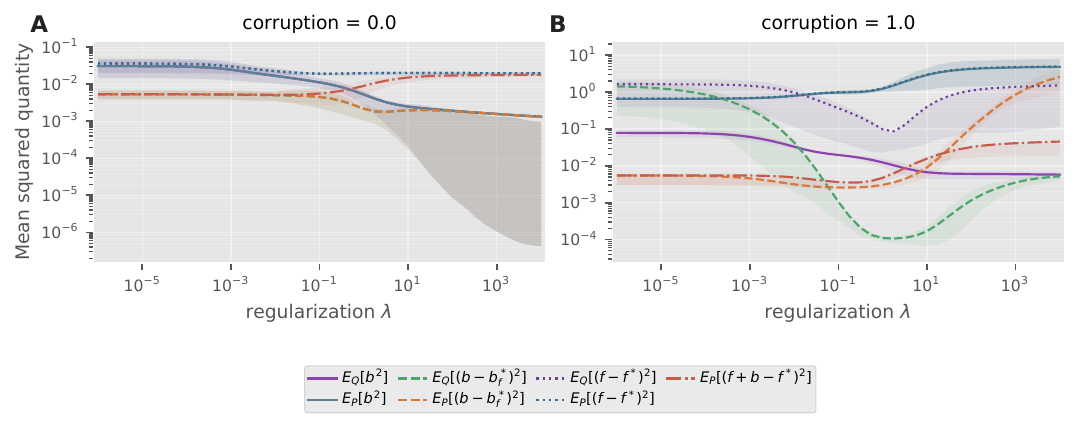}
  \caption{\textbf{Auxiliary-component diagnostics for the synthetic
      regression experiment.}  For the same one-dimensional synthetic
    problem as in \Cref{FigSyntheticTiltLinearAB}, \textbf{A} shows
    diagnostic quantities for the learned auxiliary component $b$ at
    corruption level $0$.  In the matched-domain case, all quantities
    involving $b$ remain small, and increasing $\lambda$ keeps the
    target-side contribution controlled.  \textbf{B} repeats the
    diagnostics at corruption level $1$.  Under the large shift, the
    source- and target-side quantities separate strongly, consistent
    with $b$ fitting source-specific residual structure while the target
    penalty limits its effect on the deployed component $f$.}
  \label{FigSyntheticTiltLinearDiagnostics}
  \end{center}
\end{figure}

\Cref{FigSyntheticTiltLinearDiagnostics} complements the preceding
$\lambda$ sweep.  When the domains are nearly matched, the auxiliary
component remains small.  Under the larger shift, the diagnostics are
consistent with $b$ carrying more source-specific residual variation,
while the target-side penalty limits how much of that variation is
inherited by the deployed predictor $f$.

\myparagraph{Changing the auxiliary dimension in a bounded-ratio problem}
\textbf{Setup.}  To isolate the role of $\lambda$ and the complexity of
$b$, we run a finite-linear diagnostic under a bounded density ratio.
The source distribution is uniform on $[0,1]$, and the target
distribution is obtained by a smooth raised-cosine density ratio with
$0.13 \leq q(x)/p(x) \leq 7.8$.  We use $320$ source samples, $320$
target covariates, Gaussian noise with standard deviation $0.08$, and
average over $100$ trials.  The deployed class $\Fclass$ is a shifted
Legendre basis of dimension $d_f$, while the auxiliary class
$\Bclass$ is a real Fourier basis of dimension $d_b$; the figure
compares $(d_f,d_b)\in\{(20,8),(8,20),(8,8),(20,20)\}$.

\begin{figure}[h!]
  \begin{center}
  \widgraph{0.98\textwidth}{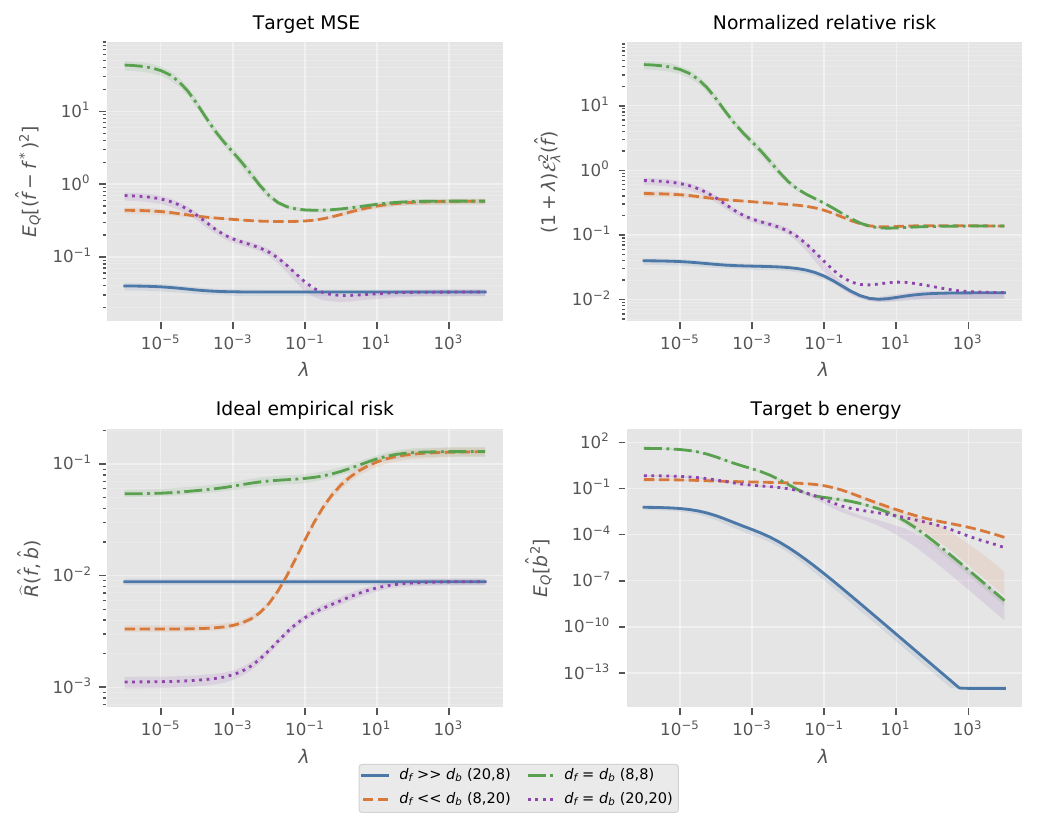}
  \caption{\textbf{Finite-linear $\lambda$ sweep under bounded density
      ratio.}  The four panels reproduce the first four diagnostics from the
    dimension sweep, except that the second panel reports
    $(1+\lambda)\Errlam$ rather than the raw $\Errlam$.  When $d_f=20$, the
    task is effectively well specified for the deployed class.  In this case,
    a smaller auxiliary class keeps the small-$\lambda$ solution closer to
    source ERM, whereas a rich auxiliary class can be worse at small
    $\lambda$.  When $d_f=8$, the deployed class is misspecified and a richer
    auxiliary class improves performance in this sweep.}
  \label{FigBoundedRatioLinearLambdaSweep}
  \end{center}
\end{figure}

\Cref{FigBoundedRatioLinearLambdaSweep} keeps the density ratio bounded
and varies the dimensions of $f$ and $b$.  When $d_f=20$, the deployed
class is close to well specified; in this case a smaller auxiliary class
keeps the small-$\lambda$ solution close to source ERM, whereas making
both $f$ and $b$ rich can be worse at very small $\lambda$.  When
$d_f=8$, the deployed class is misspecified and the richer auxiliary
class is helpful.  These finite-dimensional curves are consistent with
the view that $\lambda$ controls how much auxiliary flexibility is used.

\myparagraph{A lower-dimensional neural control}
\textbf{Setup.}  This lower-dimensional neural-network control uses
covariates in $[0,1]^{16}$ with independent
product-Beta source and target marginals: the target endpoint is
Beta$(2,5)^{16}$ and the source endpoint is Beta$(5,2)^{16}$.  The
regression function is generated by a fixed ReLU teacher network, and
we use $4096$ labeled source samples, $4096$ unlabeled target
covariates, target-test sets of size $65536$, and $20$ trials.  The
deployed predictor $f$ is a one-hidden-layer ReLU network of width
$16$; \tilt may additionally use a deeper and wider auxiliary network
$b$.  Source ERM, exact \iw, exact \rl, kernel-estimated \rl, and
\tilt are tuned over the same validation protocol used for the other
synthetic experiments.

\begin{figure}[h!]
  \begin{center}
  \widgraph{0.44\textwidth}{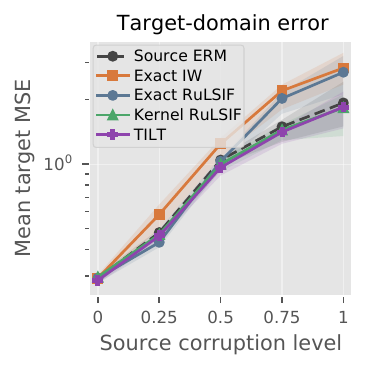}
  \caption{\textbf{Well-specified beta-product neural control.}
    Target MSE is reported for a $16$-dimensional beta-product
    covariate-shift problem with a fixed ReLU-teacher regression
    function.  Unlike the high-dimensional weak-class experiment in
    \Cref{FigSyntheticTiltLinearAB}, the deployed neural class is
    sufficiently rich for this lower-dimensional task.  Source ERM and
    kernel-estimated \rl therefore perform similarly to \tilt in this
    setting.}
  \label{FigBetaProductWellSpecified}
  \end{center}
\end{figure}

\Cref{FigBetaProductWellSpecified} is included as a control.  In this
lower-dimensional setting, the deployed neural class is already rich
enough that source ERM and kernel-estimated \rl remain competitive with
\tilt over much of the shift range.  Thus this particular
well-specified regime is not a stress test for \tilt.

\subsection{Additional CIFAR-100 Results}
\label{AppCifar100Additional}

\Cref{FigCifar100SingleCorruptionLoss} shows that the single-corruption
results are not uniform across corruption types.  For Gaussian and
defocus blur, \kltilt overtakes vanilla \kd once the shift is large
enough.  Contrast is milder in this sweep: the teacher remains accurate
over most of the path, and the benefit of \kltilt appears only around
severity $1.5$.  Pixelate shows little separation between \kd and
\kltilt in this configuration.  Thus these plots support the main
CIFAR-100 trend for several corruptions, but they also make clear that
the size of the gain depends on the corruption.

\myparagraph{Metric robustness on the main CIFAR-100 path}
\textbf{Setup.}  We report the top-5 metric for the same
mixed-corruption CIFAR-100 experiment used in \Cref{FigCifar100}.  The
source is clean CIFAR-100; the target path jointly changes brightness,
contrast, color, and Gaussian blur; the methods are source ERM, \kd,
\kdtilt, and \kltilt with the same validation-selected
hyperparameters and seed aggregation as in the main comparison.
\begin{figure}[h!]
  \begin{center}
  \widgraph{0.52\textwidth}{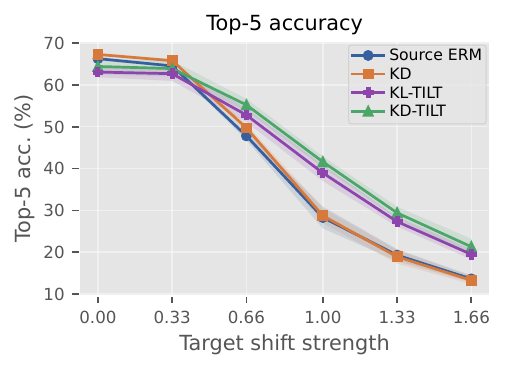}
  \caption{\textbf{Top-5 accuracy on the CIFAR-100 covariate-shift
      experiment.}  This figure complements \Cref{FigCifar100} by
    reporting top-5 accuracy on the shifted target-test set for the
    same source ERM, \kd, \kdtilt, and \kltilt methods.  Solid curves
    show means over seeds and shaded bands indicate $\pm1$ standard
    deviation.}
  \label{FigCifar100Top5}
  \end{center}
\end{figure}
\Cref{FigCifar100Top5} gives the corresponding top-5 view of the main
CIFAR-100 path.  The ordering is similar to the top-1 and loss plots:
tilted distillation is competitive at small shifts and becomes better
than source ERM and vanilla \kd once the target corruption is larger.

\myparagraph{CIFAR-100 behavior across individual corruption types}
\textbf{Setup.}  This experiment tests whether the CIFAR-100 behavior in
\Cref{FigCifar100} depends on the particular mixed corruption path used
in the main text.  The source domain is clean CIFAR-100, with
the same source-trained ResNet-20 teacher, weak CNN student $f$, and
ResNet-20 auxiliary logit model $b$.  On the target side, we apply one
corruption type at a time: Gaussian blur, defocus blur, contrast, or
pixelate, with severity in $\{0,0.5,1.0,1.5,2.0\}$.  Brightness is
omitted to keep the panel readable.  For \kltilt, we report
the best validation-selected value over $\lambda\in\{10,30,100\}$, and
all student curves average three seeds.

\begin{figure}[h!]
  \begin{center}
  \widgraph{0.67\textwidth}{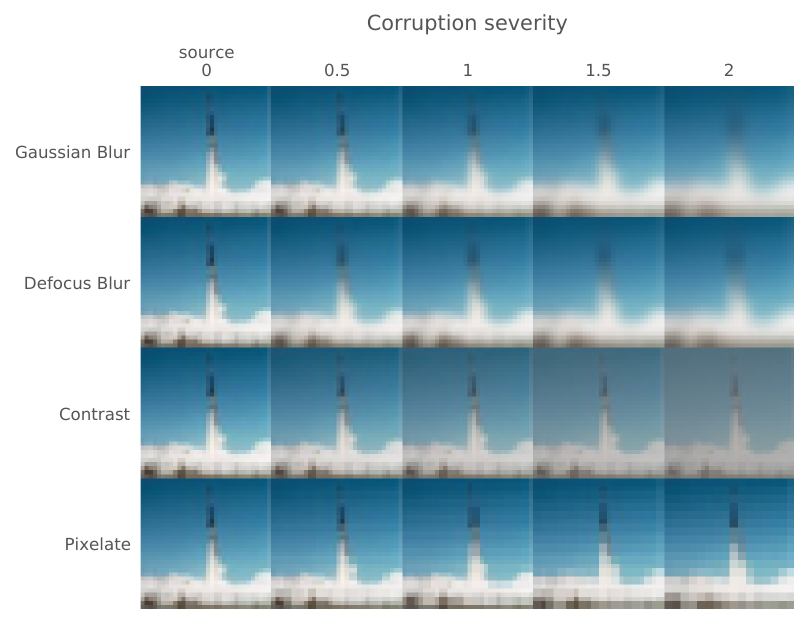}
  \par\vspace{0.35em}
  \widgraph{0.60\textwidth}{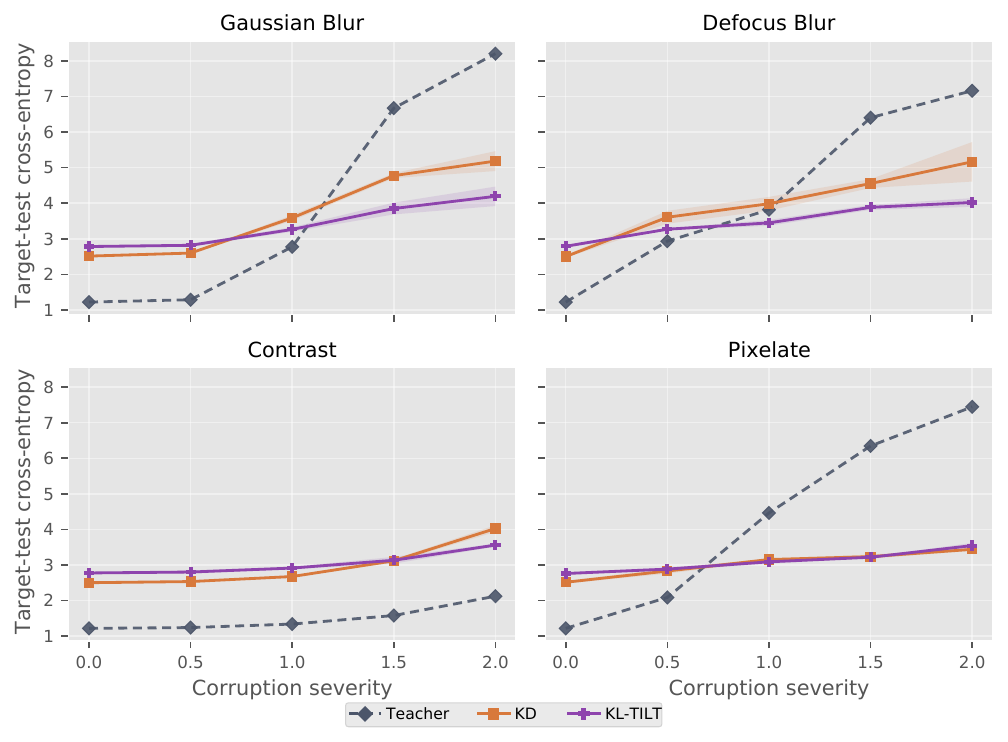}
  \caption{\textbf{Target-test cross-entropy under single-corruption
      CIFAR-100 shifts.}  This figure repeats the CIFAR-100
    distillation experiment while varying the target corruption type
    one at a time.  The top grid visualizes the same CIFAR-100 test
    image under Gaussian blur, defocus blur, contrast, and pixelate
    corruptions; columns correspond
    to the severity values used in the sweep.  The bottom panels report
    target-test cross-entropy.  The teacher curve is the source-trained
    ResNet-20 evaluated directly on the shifted target-test set.  The
    \kd and \kltilt curves show means over three seeds with $\pm1$
    standard-deviation bands; \kltilt denotes the validation-selected
    Bregman-IIW/KL-TILT model.}
  \label{FigCifar100SingleCorruptionLoss}
  \end{center}
\end{figure}

\end{document}